\def\gstp{\textsc{GSTP}\xspace}
\def\cgsp{\textsc{CGSP}\xspace}
\def\bgsp{\textsc{BGSP}\xspace}
\def\pgsp{\textsc{PGSP}\xspace}
\def\mapf{\textsc{MAPF}\xspace}
\def\mocgsp{{\textsc{MOCGSP}}\xspace}
\def\mobgsp{{\textsc{MOBGSP}}\xspace}
\def\mopgsp{{\textsc{MOPGSP}}\xspace}
\def\scg{\textsc{SCG}\xspace}
\def\E{\mathbb{E}}
\DeclareMathOperator{\Var}{Var}
\DeclareMathOperator{\Cov}{Cov}
\newcommandx{\mg}[2][1=]{}
\newcommandx{\jy}[2][1=]{}
\begin{document}
\title{Optimally Solving Colored Generalized Sliding-Tile Puzzles: Complexity and Bounds}
\titlerunning{Colored Generalized Sliding-Tile Puzzles}
%
\author{Marcus Gozon \inst{1} 
\and Jingjin Yu \inst{2}\orcidID{0000-0003-4112-2250}}
\authorrunning{M. Gozon and J. Yu}
%
\institute{University of Michigan, Ann Arbor, MI 48109, USA \\ \email{mgozon@umich.edu} \and
Rutgers University, New Brunswick, NJ 08901, USA \\
\email{jingjin.yu@cs.rutgers.edu}}
\maketitle              
\begin{abstract}
\;\;\;\;The generalized sliding-tile puzzle (\textsc{GSTP}), allowing many\\ square tiles on a board to move in parallel while enforcing natural geometric collision constraints on the movement of neighboring tiles, provide a high-fidelity mathematical model for many high-utility existing and future multi-robot applications, e.g., at mobile robot-based warehouses or autonomous garages.
Motivated by practical relevance, this work examines a further generalization of \gstp called the \emph{colored generalized sliding-tile puzzle} (\cgsp), where tiles can now assume varying degrees of distinguishability, a common occurrence in the aforementioned applications. Our study establishes the computational complexity of \cgsp and its key sub-problems under a broad spectrum of possible conditions and characterizes solution makespan lower and upper bounds that differ by at most a logarithmic factor.
These results are further extended to higher-dimensional versions of the puzzle game.

\keywords{Multi-Robot Path Planning  \and Sliding-Tile Puzzles}
\end{abstract}

\section{Introduction}
Sliding-tile puzzles, such as the 15-puzzle \cite{Wikipedia15PUZ,loyd1959mathematical} and Klotski \cite{WikiKlotski} (see., e.g., Fig.~\ref{fig:15-gstp}), ask a player to sequentially move interlocked tiles on a board via one or more \emph{escorts}, or swap spaces, to reach some desired goal configurations. Such problems, modeling a wide array of important real-world challenges/applications including at autonomous mobile robot-based warehouses \cite{wurman2008coordinating,mason2019developing} and in garage automation \cite{yunfeng2022efficient,guo2023toward}, have inspired and/or served as the base model in many complexity and algorithmic studies, e.g.,  \cite{wilson1974graph,RATNER1990111,auletta1999linear,goldreich2011finding,kornhauser1984coordinating,goraly2010multi}, especially drawing attention from the combinatorial search community \cite{culberson1994efficiently,culberson1998pattern,sharon2015conflict,li2021eecbs,okumura2023lacam}. 

\begin{figure}
    \centering
    \includegraphics[width=0.8\columnwidth]{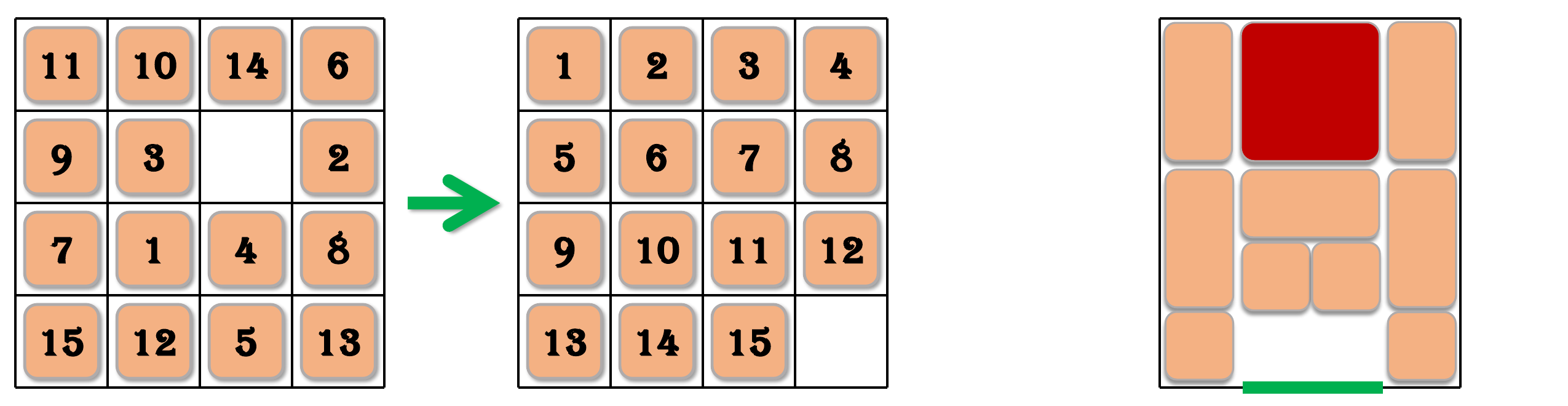}
    \caption{[left] Start and goal configurations of a $15$-puzzle instance. In the generalized sliding-tile puzzle, modeled after the $15$-puzzle, there can be $1+$ escorts and multiple tiles may move synchronously, e.g., tile $3$ and $9$ may move to the right in a single step in the left configuration. [right] In a Klotski puzzle (bearing many other names such as Huarong Road) \cite{WikiKlotski}, tiles are rearranged via sliding to allow the large square red title to ``escape'' from the green opening. Whereas the goal configuration in the $15$-puzzle is a single fixed one, the goal configuration in the Klotski only specifies the location of the largest tile and leaves the other tiles' final configuration unspecified.}
    \label{fig:15-gstp}
\end{figure}

Inspired by these previous studies, the vast application potential, and observing the possibility to simultaneously move multiple tiles even when there is a single escort in many real-world settings, we recently examined the \emph{generalized sliding-tile puzzle} (\gstp) \cite{gozon2024computing}.
In a \gstp instance, on an $m_1 \times m_2$ board lie $n < m_1m_2$ \emph{labeled} square tiles. Using the leftover $m_1m_2 - n$ escorts as swap spaces, tiles must be rearranged into a goal configuration. It is established that minimizing the required makespan for \gstp is NP-hard and that the fundamental optimality lower bounds match the polynomial-time algorithmic upper bounds. A key difference between \gstp and most well-studied \emph{multi-robot path planning} (MRPP) problems \cite{luna2011push,yu2013multi,YuLav16TOR,yu2018constant,demaine2019coordinated} or the largely equivalent \emph{multi-agent path-finding} (\mapf) problems \cite{silver2005cooperative,sharon2015conflict,stern2019multi,li2021eecbs,okumura2023lacam} is that \gstp enforces the \emph{corner following constraint} (CFC). The CFC forbids two neighboring tiles located at coordinates $(x, y)$ and $(x + 1, y)$ from executing the synchronous move where the first tile moves to $(x, y \pm 1)$ while the second tile moves to $(x, y)$ simultaneously. The CFC is a common physical constraint in practical warehouse and garage automation applications, e.g., two bordering rectangular robots must abide by the CFC to avoid collision.

In this work, we investigate a natural generalization of \gstp where tiles do not necessarily have unique labels. Instead, tiles can take some $k \ge 2$ colors where all tiles of the same color are indistinguishable/interchangeable, so swapping two tiles of the same color does not create a new tile configuration.
We call this problem the \emph{colored generalized sliding-tile puzzle} (\cgsp).
Our effort in this paper is mainly devoted to two key sub-problems of \cgsp. In the first, the number of colors is fixed at $k = 2$, yielding the \emph{binary generalized sliding-tile puzzle} (\bgsp). 
In the second, among a total of $n \ge k$ tiles, tiles $1$ to $k-1$ each have unique colors $1$ to $k-1$, respectively, while tiles $k$ to $n$ have the same color $k$. We denote this \cgsp variant as the \emph{partially-colored generalized sliding-tile puzzle} (\pgsp).

As a summary of our main contributions, we make a rich classification of the complexity and makespan bounds of optimally solving \bgsp, \pgsp, and \cgsp, covering all possible numbers of escorts with further generalizations to higher dimensions. 
On the side of the computational complexity, we show that it is NP-hard to compute makespan optimal solutions for \bgsp and \pgsp (and therefore, \cgsp) even when there is a single escort.
The hardness result generalizes to an arbitrary number of escorts and to higher-dimensional grid settings. In contrast, the hardness results for \gstp are only established for a large number of escorts \cite{gozon2024computing}.
On the side of achievable makespan optimality, the lower/upper bounds for \bgsp and \pgsp are similar for different numbers of escorts and problem parameters. The bounds for \bgsp, differing up to at most a logarithmic factor, are summarized in Tab.~\ref{tab:lu}, where there are $\Theta(m^r)$ tiles of one color, say black, with $0 \leq r \leq d$ ($d$ is the grid dimension), and $p \geq 1$ denotes the number of escorts. w.h.p. means \emph{with high probability}.

\textbf{Note}: our makespan lower bound only applies to the case in which the goal configuration has the tiles sorted e.g. in row-major ordering, which is the important case for many applications, such as in the efficient design of automated garages.
Our upper bound applies to all initial/goal configurations by using the sorted ordering as an intermediate goal from the initial/goal configurations, so we will restrict our attention to the sorted case below.

\begin{table}[h!]
    \caption{Makespan lower and upper bounds for \bgsp. This is the same as for \pgsp and \cgsp where the black tiles correspond to all tiles but the largest group of colored tiles.}
    \centering
    \begin{tabular}{c|c|c}
    Black Tiles \, & Lower bound (w.h.p.) & Upper bound  \\
     \hline
    $0 < r < d$ & \,$\Omega\left(m + \dfrac{m^{1 + r[\frac{d-2}{d-1}]}}{p} + \dfrac{m^r}{p}\right)$\, & \,$O\left(m \log m  + \dfrac{m^{1 + r[\frac{d-2}{d-1}]}}{p}+ \dfrac{m^r \log m}{p}\right)$ \\
    $r = 0$ & $\Omega(m)$ & $O(m)$ \\
    $r = d$ & $\Omega(m^d / p)$ & $O(m^d / p)$
    \end{tabular}
    \label{tab:lu}
\end{table}
Our study of these further \gstp generalizations is motivated by their relevance to many high-utility applications in existing and next-generation autonomous systems. As one example, in autonomous warehouses where many mobile robots ferry goods around \cite{wurman2008coordinating}, robots that do not carry anything are anonymous (i.e., having the same color) and robots carrying packages are each unique, mirroring the settings in \pgsp. As another, in a future autonomous garage \cite{guo2023toward} at an airport where cars are parked and moved around by mobile robots going under them, during off hours, we may want to sort vehicles based on the date they are expected to be picked up to minimize the time needed to retrieve these vehicles during peak activity time, leading to a situation where \bgsp and \cgsp can serve as accurate abstract models.

\section{Preliminaries}
\subsection{Colored Generalized Sliding-Tile Puzzles}
In the \emph{generalized sliding-tile puzzle} (\gstp) \cite{gozon2024computing}, $n < m_1m_2$ tiles, uniquely labeled $1, \ldots, n$, lie on a rectangular $m_1 \times m_2$ grid $G=(V, E)$. 
A \emph{configuration} of the tiles is an injective mapping from $\{1, \ldots, n\} \to V = \{(v_x, v_y)\}$ where $1 \le v_x \le m_2$ and $1 \le v_y \le m_1$. 
Tiles must be reconfigured from a random configuration $\mathcal S=\{s_1, \ldots, s_{n}\}$ to some ordered goal configuration $\mathcal G=\{g_1, \ldots, g_{n}\}$, e.g., a row-major ordering of the tiles, subject to certain constraints. 
Specifically, let the \emph{path} of tile $i$, $1\le i \le n$, be $p_i : \mathbb{N}_0 \to V$.
Then \gstp seeks a \emph{feasible path set} $P = \{p_1, \ldots, p_n\}$ such that the following constraints are met for all $1\le i, j\le n$, $i \ne j$ and for all time steps $t \geq 0$: 
\begin{itemize}
    \item[\textbullet] Continuous uniform motion: $p_i(t+1) = p_i(t)$ or $(p_i(t+1), p_i(t)) \in E$,
    \item[\textbullet] Completion: $p_i(0) = s_i$ and $p_i(T) = g_i$ for some $T\ge 0$,
    \item[\textbullet] No meet collision: $p_i(t) \neq p_j(t)$,
    \item[\textbullet] No head-on collision: $(p_i(t)=p_j(t+1) \land p_i(t+1) = p_j(t)) = false$, and
    \item[\textbullet] Corner-following constraint: let $e_i(t) = p_i(t+1) - p_i(t)$ be the movement direction vector. If $p_i(t+1) = p_j(t)$, then $e_i(t) \not\perp e_j(t)$.
\end{itemize}

In the \emph{colored generalized sliding-tile puzzle} or \cgsp, tiles take on one of $k \ge 2$ \emph{colors}. We call $k = 2$ case the \emph{binary generalized sliding-tile puzzle} or \bgsp.
For the goal configuration of \cgsp, we do not fix it but require the tiles be arranged such that each color is a contiguous block in some layered manner, e.g., if there are $15$ tiles on a $4 \times 4$ board, $4$ white and $11$ black, then the goal could have white tiles occupy the first row and the black tiles located at the lower three rows, with the escort occupying the lower right spot. Or the white tiles could occupy the leftmost column. 
We call the case where there are unique tiles for each of the colors $1, \ldots, k - 1$ and $n - k + 1$ tiles with color $k$ the  \emph{partially-colored generalized sliding-tile puzzle} or \pgsp.

Let $T_P$ be the smallest $T\ge 0$ such that the completion constraint is met for a given path set $P$. Naturally, it is desirable to compute $P$ with minimum $T_P$. 
We define the decision version of makespan-optimal \cgsp/\bgsp/\pgsp  as: 

\vspace{2mm}
\noindent\mocgsp/\mobgsp/\mopgsp\\
\noindent INSTANCE: A \cgsp/\bgsp/\pgsp instance and a positive integer $K$.\\
\noindent QUESTION: Is there a feasible path set $P$ with $T_P \le K$?

\begin{remark}
As a \gstp variant, \cgsp allows many tiles to move simultaneously in a time step. Visually (see, e.g., Fig.~\ref{fig:escort}), this translates to the simultaneous ``teleportation'' of the escorts along non-intersecting straight lines. Such teleportation forms the basic moves in \cgsp. As this is frequently used throughout the paper, it is beneficial to keep it in mind when reading the paper.     
\end{remark}

\begin{figure}
    \centering
    \includegraphics[width=0.9\columnwidth]{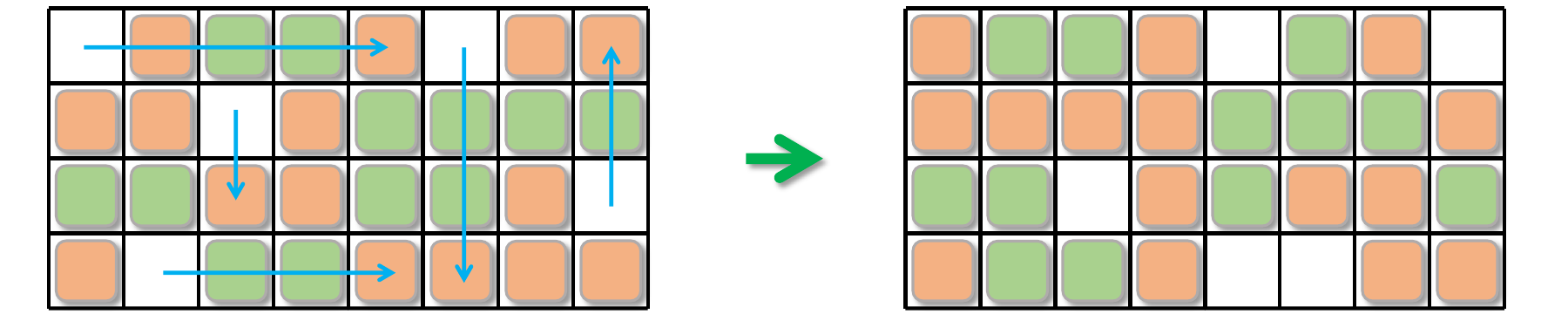}
    \caption{Illustration of escort teleportation in a \bgsp instance. The arrows show the teleportation intents, which are executed in a single step by moving tiles synchronously in opposite directions.}
    \label{fig:escort}
\end{figure}

\begin{remark}
Due to limited space, this paper mainly focuses on the case of $m = m_1 = m_2$; the presented results have straightforward generalizations to $m_1 \ne m_2$, e.g., via adding padding one dimension. In our experience, the case of $m = m_1 = m_2$ can often be the one yielding the worst bounds \cite{guo2022sub15}.
\end{remark}

\begin{remark}
Most of the results for 2D grids readily generalize to high-dimensional grids, e.g., a $d$ dimensional $m\times \ldots \times m$ grid. When applicable, the generalization will be briefly discussed. With a slight abuse of notation, we refer to the high-dimensional problems using the same names as the 2D setting with prefixes, e.g., $d$-dimensional \cgsp and \mocgsp. In higher dimensions, cubes, instead of tiles, are being slid. 
\end{remark}

\subsection{The Sand Castle Game}
In developing the hardness for single-escort \mocgsp, we make use of the \emph{sand castle game} (\scg) to help with the explanation. In \scg, we are given an $m_1 \times m_2$ grid with an initial and final configuration consisting of sand (black tiles) and holes (white tiles).
In transforming the first configuration to the second, the only allowed move in \scg is the repeated execution of \emph{collapse turns} or simply \emph{turns}. In a turn,  we pick an \emph{open tile}, which is a tile (black or white) that has a white tile as the rightmost element in the same row, and collapse it (sand filling holes) by shifting all the tiles to the right including the open tile by one tile to the right. Then all the tiles directly above the chosen open tile are moved down by a tile.
The above sequence of operations is referred to as \emph{collapsing a (open) tile}. 
The game ends when we reach the final configuration (a win) or when there are no more open tiles (a loss).

\section{Hardness of \mocgsp/\mobgsp/\mopgsp}
The hardness of makespan-optimal \gstp \cite{gozon2024computing} implies the hardness of \mocgsp in the special case $k = n$, though with a varying number of escorts.
Adapting the proof can also lead to a hardness proof of \mopgsp for the more general problem.
However, these results require a large number of escorts and do not extend to \mobgsp.
Here, we establish the fine-grained hardness of \mobgsp, \mopgsp, and \mocgsp, even in the most constrained setting with a single escort. Extensions further allow nearly arbitrary composition ratios for the colored tiles, any number of escorts, and high-dimensional settings.

\subsection{Constructing Single-Escort \mobgsp from 3SAT}\label{sec:reduction}
Given a 3SAT \cite{karp2010reducibility} instance with variables $x_1, \ldots, x_N$ and clauses $C_1, \ldots, C_M$, we will construct a corresponding single-escort \mobgsp instance. The instance will embed an \scg with ``passageways'' on the top and right
for the escort to simulate a turn (collapsing an open tile) of \scg every four moves abiding by the CFC.
We define the following variables:
\begin{align*}
    \ell = M + 2N + 2, \quad
    h = 2N\ell + 4N + 2, \quad
    q = \ell[4N(h+1) + 15].
\end{align*}
Roughly speaking, $\ell$ will be the length of our variable gadget, $h$ will be the distance needed for a clause tile to fall into the \emph{satisfaction position}, and $d$ will be the length of the \emph{reward} to enforce clause satisfaction.
Let $r = 4N(h+1) + h + q + 16$ and $c = q + M + 2N + 2$ be the number of rows and columns of the \scg.
Let $m = r + 2rc + 1$ and construct an $m \times m$ grid $B$ for our single-escort $\mobgsp$ instance (see Fig.~\ref{fig:bsnpuz}). The bottom left $(r + 2rc) \times c$ subgrid is denoted as $G$ and within it denote $H$ as the bottom $r \times c$ subgrid in which the \scg will be played. The tiles outside $H$ are needed to enforce the rules.
Above $G$ is a row of black tiles with a tail of length $w$, the number of white tiles in the SCG instance/moves needed to win (to become precise later) to ensure that the goal configuration is also the sorted configuration.
An escort is placed in the top right corner, with white tiles placed everywhere else.

\begin{figure}[h]
    \centering
    \includegraphics[width=\textwidth]{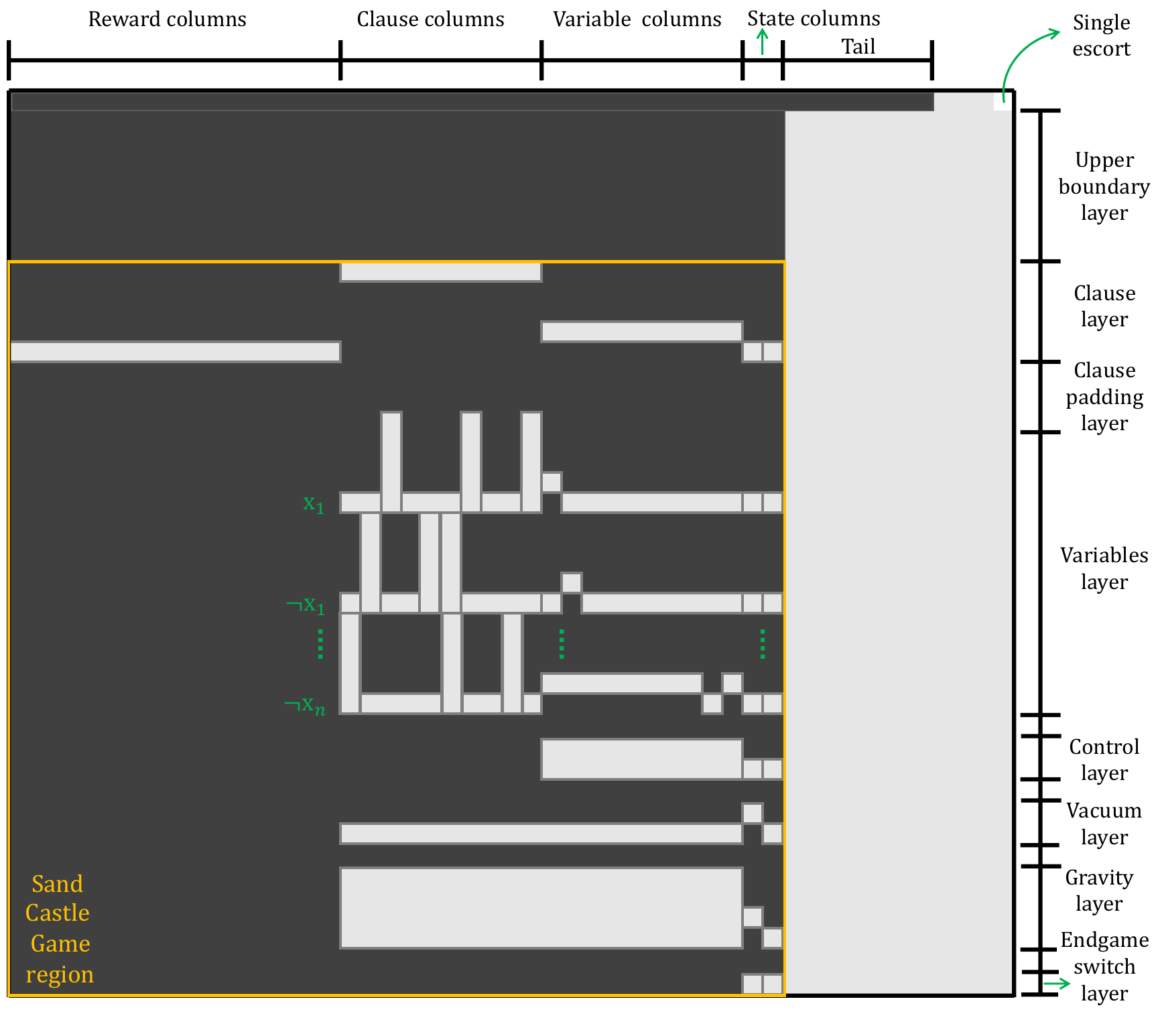}
    \caption{Illustration of the \mobgsp instance constructed from a 3SAT instance, showing the initial configuration. $G$ corresponds to the smallest rectangular region enclosing all the black tiles aside from the top row. $H$, the \scg board, is highlighted in orange. Not drawn to scale.}
    \label{fig:bsnpuz}
\end{figure}

We assign black and white tiles to the grid $G$ to describe the initial configuration of the \mobgsp instance.
$G$ has, from top down, the \emph{upper boundary layer}, \emph{clause layer}, \emph{clause padding layer}, \emph{variables layer}, \emph{control layer}, \emph{vacuum layer}, \emph{gravity layer}, and \emph{endgame switch layer}.
The upper boundary layer and the clause padding layer are entirely black tiles. 
Padding layers consisting of a single row of black tiles are added between each pair of consecutive layers neither of which are all black tiles to prevent interference.
We label the two rightmost columns as the \emph{state columns}, the next $2N$ as the \emph{variable columns}, the next $M$ as the \emph{clause columns}, and the last $d$ as the \emph{reward columns}.

The \emph{upper boundary layer} consists of a block of $2rc \times c$ black tiles. Its role is to prevent white tiles from bubbling up from the covered $c$ columns as the single escort moves to enforce the rules of the \scg.

The \emph{clause layer} consists of $h+1$ rows. The state columns have white tiles at the bottom (row) while the variable columns have a row of white tiles one row above. The clause columns have a row of white tiles at the top and the reward columns have a row of white tiles at the bottom. All other tiles are black.
The variable columns enforce a boolean choice of the variables, and the white tiles in the clause columns will simulate clause satisfaction by moving to the bottom row to allow for the leftmost $q$ \emph{reward tiles} to be shifted out.

The \emph{clause padding layer} consists of $q$ rows of black tiles.

The \emph{variables layer} consists of $2N(h+1)$ rows, $h+1$ for each literal $x_i$ and $\lnot x_i$; we call these individual blocks of rows \emph{literal gadgets}.
For each literal gadget, the state columns have white tiles at the bottom.
For the variable columns for a literal $x_i$, from the left, the first $2(i-1)$ columns have white tiles along the second row, and the last $2(n-i)$ columns have white tiles along the first row.
In addition, the ($2i-1$)th column has a white tile on the second lowest row while the ($2i$)th column has a white tile on the bottom row; $\lnot x_i$ differs in that columns $2i-1$ and $2i$ are swapped.
In the clause columns, the bottom row is filled with white tiles, and for every clause $C_j$ in which the literal participates, assign white tiles to the entire $j$th column from the left; we call these \emph{clause ropes}.
All other tiles are black.
The variable columns are designed to ensure that only one of $x_i$ or $\lnot x_i$ is true, i.e. it has a line of white tiles that can reach the clause columns; these allow for the white tile corresponding to the clauses in which the literal participates to fall onto the same level as the leftmost $d$ reward tiles in the clause layer to simulate clause satisfaction.

The \emph{control layer} consists of $2$ rows. 
The state columns have white tiles on the bottom, while the variable columns are filled with white tiles.
All other tiles are black. This layer allows for the boolean choice of the variables.

The \emph{vacuum layer} consists of 2 rows.
In the state columns, the right column has a white tile on the bottom row whereas the left column has a white tile on the row above.
The variable and clause columns contain a row of white tiles at the bottom, and all other tiles are black.
This layer will be used in conjunction with the gravity layer below to shift out all white tiles assuming that all the clauses are satisfied in the boolean formula.

The \emph{gravity layer} consists of $2N(h+1)+6$ rows.
As in the vacuum layer, the state columns have the right column containing a single white tile on the bottom row and the left column with a white tile on the row above. The variable and clause columns are filled with white tiles, and all other tiles are black.
This layer will help bring down all white tiles below the clause layer to be collapsed by the vacuum layer.

The endgame switch layer consists of a single row with two white tiles on the two rightmost columns with all other tiles black.
This will ensure that all clauses are satisfied before the white tiles below the clause layer are shifted out.

The above description, clearly computable in polynomial time, completely describes $G$, since accounting for all layers and padding, we have described $2rc + [h+1] + q + [2N(h+1)] + 1 + [2] + 1 + [2] + 1 + [2N(h+1) + 6] + 1 + [1] = 2rc + 4N(h+1) + h + q + 16 = 2rc + r$ rows, which completes the description of the \emph{initial configuration}.
The \emph{final configuration} of the \mobgsp instance is the sorted configuration with the black tiles on the left (columns of $G$ and $H$) and white tiles on the right, with the escort in the top right corner.

The final piece of information needed to obtain a \mobgsp instance is the upper bound $K$: if $w$ is the number of white tiles in $G$, then let $K = 4w$.
This is because the escort will simulate a turn of the \scg in four jumps.

\subsection{Properties of the Constructed \mobgsp Instance}\label{sec:equiv}
With the \scg and \mobgsp instance constructed, consider any solution satisfying the \mobgsp instance (the solution is a witness of the \mobgsp being satisfiable; with a slight abuse of notation, we simply refer to this as a \emph{solution} to the \mobgsp instance).
We note that the thick upper boundary layer full of black tiles forces white tiles to go around it during the reconfiguration.
Because there is a single escort with a restrictive makespan, white tiles must be cycled around in the counterclockwise direction one at a time to be pushed out of the Sand Castle.
Equivalently, the escort will be ``rotated'' in a clockwise direction, with each rotation along the corners of a rectangle on the $m \times m$ board. This forces the solutions to \mobgsp to simulate turns of the \scg. We have (see the Appendix for a complete proof):
\begin{lemma}\label{l:mobgsp-scg}
A solution to the constructed \textup{\mobgsp} instance corresponds to a winning set of turns of the embedded \scg instance. 
\end{lemma}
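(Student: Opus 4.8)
The plan is to establish a bijective correspondence between feasible \mobgsp solutions achieving the makespan bound $K = 4w$ and winning turn sequences of the embedded \scg instance, arguing both directions. The core physical observation to exploit is the one flagged in the preamble to the lemma: the massive upper boundary layer of black tiles, together with the single-escort constraint and the tight makespan budget, forces every white tile to be evacuated by being cycled counterclockwise around the board, which is equivalent to the escort being rotated clockwise around the corners of nested rectangles. First I would make this precise by counting: since there are $w$ white tiles and the budget is exactly $4w$ steps, and each white tile must traverse the perimeter to exit, I would show that the escort can afford exactly four ``teleportation'' jumps per white tile evacuated, with no slack. This tightness is what eliminates all degrees of freedom and pins the escort's behavior down to a rigid pattern.

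Next I would argue the forward direction (a \mobgsp solution yields a winning \scg turn sequence). The key step is to show that, under the makespan budget, the only admissible escort motion is the four-jump cycle that—when projected onto the \scg board $H$—realizes exactly one collapse turn: picking an open tile, shifting its row rightward, and dropping the column above it. I would verify that the CFC (corner-following constraint) is precisely what forces the escort to execute these shifts as coherent synchronized block moves rather than arbitrary single-tile rearrangements, so that each group of four steps maps to a legal \scg turn. The geometric layout—state columns, the passageways on the top and right, and the padding layers preventing interference—must be checked to guarantee that no ``illegal'' partial move can sneak in within the budget. Concluding that the final sorted configuration is reached iff all $w$ white tiles have been shifted out gives that the \mobgsp solution corresponds to a sequence of turns ending in the \scg win condition.

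For the converse (a winning \scg turn sequence yields a \mobgsp solution of makespan $K$), I would give an explicit simulation: each \scg collapse turn is implemented by the prescribed four escort jumps, and I would check feasibility of this path set against all five \mobgsp constraints (continuous uniform motion, completion, no meet collision, no head-on collision, and CFC), relying on the teleportation picture from the first remark so that each jump is a single synchronized step. Summing $4$ steps per turn over a winning sequence that evacuates all $w$ white tiles yields total makespan exactly $4w = K$, and the terminal configuration is the sorted goal by construction.

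The hard part will be the forward direction's rigidity argument—proving that \emph{no} feasible \mobgsp solution of makespan $\le 4w$ can deviate from the canonical four-jumps-per-turn pattern. This is where the budget must be shown to be genuinely tight: I expect to need a careful amortized or potential-function accounting that assigns to each white tile a lower bound of $4$ steps for its evacuation (distance around the perimeter divided by the throughput a single escort can sustain under the CFC), and to show that any extraneous or out-of-order move strictly increases the total, violating the bound. Establishing that the upper boundary layer truly blocks all shortcuts (e.g., no white tile can ``bubble up'' through the covered columns, as asserted in the construction) and that the padding layers isolate the layers as intended will require tracing the collision and corner-following constraints through the specific gadget geometry, which is the most technical and error-prone portion of the proof.
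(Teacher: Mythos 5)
Your overall architecture matches the paper's: both directions of a correspondence, with the forward direction resting on a rigidity argument that the $4w$ budget plus the thick upper boundary layer leaves the escort no choice but to execute a clockwise four-jump cycle per evacuated white tile, each cycle realizing one \scg collapse turn. You also correctly identify the forward rigidity as the hard part. However, the accounting you propose for that step is the wrong one, and this is a genuine gap rather than a stylistic difference. You plan to charge each white tile ``distance around the perimeter divided by the throughput a single escort can sustain under the CFC.'' But in the construction a white tile does not travel the perimeter at all: it is expelled from $G$ by a \emph{single} leftward row jump of the escort across $\ell_v$ (the right boundary of $G$), moving only one cell in the process. A per-tile distance/throughput potential therefore cannot yield the needed lower bound of four jumps per tile, and it says nothing about \emph{which} jumps the escort is forced to make. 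The argument that actually works is a counting argument on the escort's jumps themselves: two consecutive jumps along the same axis can be merged, so under the budget the jumps must strictly alternate between row and column jumps, giving exactly $2w$ of each; each evacuated white tile consumes one row jump crossing $\ell_v$ leftward and one row jump returning rightward, so all $2w$ row jumps are pinned to cross $\ell_v$; and since the upper boundary layer is $2rc$ rows thick with $2w < 2rc$, the $2w$ column jumps can never lift a white tile of $H$ over the top, forcing evacuation through the right side and forcing the post-evacuation column jump to cross $\ell_h$ into the top row. That is what eliminates all slack and pins the four-stage cycle; nothing in your proposal supplies it.

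A secondary inaccuracy: you lean on the CFC as ``precisely what forces the escort to execute these shifts as coherent synchronized block moves.'' The synchronized straight-line shift is simply the definition of an escort teleportation (the basic parallel move of \bgsp); the CFC's role is to exclude other parallel motions, and the rigidity in the lemma comes from the jump-counting argument above, not from the CFC. Your converse direction (explicitly simulating each winning turn by four jumps and checking the feasibility constraints, totalling $4w$) is fine and matches the paper.
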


The key utility of the \scg is that it nicely abstracts the allowed moves in satisfying the \mobgsp to these tile collapsing turns, transforming the reduction to showing the correspondence between satisfying assignments for 3SAT and wins for \scg.
In fact, our results will show that the \scg is NP-hard as a decision problem and of independent interest.

In the forward direction, we prove: 

\begin{lemma}\label{l:sat-mobgsp}
A satisfying set of variable assignments to the \textup{3SAT} instance induces a solution to the reduced \textup{\mobgsp} instance.
\end{lemma}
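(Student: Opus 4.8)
The plan is to prove Lemma~\ref{l:sat-mobgsp} by chaining two implications: a satisfying assignment $\phi$ induces a winning sequence of turns for the embedded \scg instance, and any winning sequence of turns induces a solution to the \mobgsp instance that meets the makespan bound $K = 4w$. The second implication is the ``easy'' simulation direction and is essentially the converse of Lemma~\ref{l:mobgsp-scg}: as described in Section~\ref{sec:reduction}, a single collapse turn of the \scg is realized by rotating the escort clockwise around the appropriate rectangle of $B$ in exactly four CFC-respecting moves through the top and right passageways, which sends one white tile out of the Sand Castle region and parks it along the length-$w$ tail above $G$. Since the \scg has exactly $w$ white tiles and each turn clears one of them, a winning sequence consists of exactly $w$ turns, yielding a feasible path set with makespan $4w = K$. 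Hence it suffices to produce, from $\phi$, a winning sequence of turns for the \scg board $H$.

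I would construct this sequence block by block, from the clause and variables layers downward, letting $\phi$ dictate only the choices in the variable and control layers. First, for each variable $x_i$, collapse the open tiles of the control layer and of the literal gadget of the literal that $\phi$ makes true, so that its line of white tiles descends and connects through the variable columns up into the clause columns, while the complementary literal gadget is left inert. This drops, for every clause $C_j$ in which the chosen literal participates, the white tile of the $j$-th clause column the full distance $h$ into its satisfaction position next to the reward block. Because $\phi$ satisfies every clause, after processing all $N$ variables every one of the $M$ clause columns has its white tile in the satisfaction position, which is exactly the precondition that permits the $q$ reward tiles of the clause layer to be shifted out. I would then collapse the reward tiles, and afterwards use the vacuum layer together with the gravity layer to drain all remaining white tiles below the clause layer downward and out, row by row, finishing with the two white tiles of the endgame switch layer. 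Throughout, the all-black upper boundary layer and the all-black padding rows inserted between layers guarantee that collapses within one layer never disturb another and that an open tile is always available until $H$ becomes all black.

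The routine-but-delicate part, and the main obstacle, is the global bookkeeping: verifying that the prescribed sequence never gets stuck (an open tile exists at every step before the sorted configuration is reached), that it terminates precisely at the sorted configuration, and that it uses exactly $w$ turns so the makespan is $4w$ rather than more. This is where the precise dimensioning of $\ell$, $h$, $q$, $r$, and $c$ is used: $h$ must be large enough that a clause rope of a true literal reaches the satisfaction position but no inert gadget can accidentally trigger a reward, $q$ must be long enough that the reward is unobtainable until all clauses are satisfied, and the gravity and vacuum layers must have enough rows to absorb every white tile coming from the (possibly many) inert literal gadgets. A secondary point, already handled in the construction, is the physical realizability of each turn by four escort moves respecting the corner-following constraint, which we may simply invoke. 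I expect the write-up to spend most of its length on a careful per-layer invariant argument that tracks, after each block of turns, the set of remaining white tiles and the level to which each has fallen.
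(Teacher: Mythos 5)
Your proposal follows essentially the same route as the paper: reduce to exhibiting a winning sequence of \scg turns (invoking the already-established correspondence between \scg wins and makespan-$4w$ \mobgsp solutions), then build that sequence in three stages --- assignment-driven activation of one literal gadget per variable via the control layer, dropping each clause tile by $h$ into the satisfaction position, collapsing the reward tiles, and finally draining everything below the clause layer through the vacuum and gravity layers. Two points where the paper does work that your write-up elides: first, as literally stated your clause-satisfaction step drops the $j$-th clause tile by $h$ \emph{for every} activated literal occurring in $C_j$, which would overshoot when a clause is satisfied by more than one chosen literal; the paper only pulls a clause rope for clauses not already handled by an earlier true literal, so each clause tile falls exactly $h$. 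Second, the cleanup stage is not just bookkeeping to be deferred --- the paper specifies a concrete order (two rightmost columns of the variables and control layers, then the left endgame-switch tile to connect the vacuum and gravity layers, then a left-to-right column sweep) and uses the fact that the gravity layer has $2N(h+1)+6$ rows, exceeding the maximal distance $2N(h+1)+5$ of any leftover white tile to the vacuum layer's bottom row, to certify that no tile is stranded. With those two details filled in, your argument matches the paper's.
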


The complete proof is given in the Appendix. As a sketch, from a satisfying 3SAT assignment, for each variable $x_i$, using \scg terminology, we can work on layers above the vacuum layer to collapse some white tiles in the control layer, the variables layer, and the clause layer, so that all white tiles in the clause layer will align into a single row. After collapsing all white tiles in the clause layer, cleanup can be performed to shift out all the rest to yield a win for the \scg and therefore, a solution to the \mobgsp.

In the reverse direction, we have: 

\begin{lemma}\label{l:mobgsp-sat}
A solution to the constructed \textup{\mobgsp} instance induces a satisfying set of variable assignments to the \textup{3SAT} instance.
\end{lemma}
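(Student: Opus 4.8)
The plan is to prove the reverse direction of the reduction's correctness, showing that any valid solution to the \mobgsp instance forces a satisfying assignment of the 3SAT formula. By Lemma~\ref{l:mobgsp-scg}, I may freely work entirely in the language of the \scg: a solution to \mobgsp corresponds to a winning sequence of collapse turns of the embedded \scg instance. So the task reduces to showing that \emph{any} win of the \scg decodes into a satisfying assignment. The overall strategy is to track the white tiles layer by layer, from the bottom up, and argue that the structural constraints of the gadgets (together with the tight makespan budget $K = 4w$, which forces each white tile to be shifted out exactly once via exactly four escort jumps per turn, leaving no room for ``wasteful'' moves) pin down a consistent Boolean choice per variable and force every clause to be satisfied.

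First I would establish the \emph{tightness} consequence of $K = 4w$: since each turn of the \scg collapses a single open tile and the makespan bound allows exactly $w$ turns (four moves each), every white tile must be removed from the Sand Castle by exactly one collapse, so no column can be ``over-collapsed'' and no turn can be spent on a configuration that fails to make monotone progress toward the sorted goal. This rigidity is what converts a purely combinatorial reachability statement into a forced, almost-deterministic decoding. Second, I would read off the Boolean assignment from the \emph{variable columns}: for each variable $x_i$, the control layer and the paired columns $2i-1, 2i$ in the literal gadgets are arranged so that the line of white tiles can propagate upward through the $x_i$ gadget \emph{or} the $\lnot x_i$ gadget but not both (this is precisely the ``only one of $x_i$ or $\lnot x_i$ is true'' mechanism described for the variables layer). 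I would show that any win must commit, for each $i$, to exactly one of the two literal channels, defining the assignment.

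Third, and this is where the reward/gravity/vacuum machinery enters, I would argue that the \emph{endgame switch layer} and the $q$ \emph{reward tiles} cannot be shifted out unless every clause column has had its white tile brought down to the satisfaction position in the clause layer. Concretely, each clause rope carries a white tile that can descend to align with the bottom row of the clause layer only if one of the literals participating in clause $C_j$ has been chosen true; if some clause is unsatisfied, its white tile remains stuck, blocking the alignment of the clause layer into a single row, which in turn prevents the vacuum/gravity layers from clearing the remaining white tiles. Since a win requires \emph{all} white tiles (including the reward tiles and the endgame-switch tiles above $G$, whose removal is what makes the goal the sorted configuration) to be shifted out, every clause must be satisfied by the chosen assignment. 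Combining the per-variable commitment with the per-clause satisfaction yields a satisfying assignment for 3SAT.

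The main obstacle I anticipate is the second and third steps together: rigorously ruling out ``cheating'' solutions that attempt to satisfy a clause without honestly selecting a literal, or that try to collapse tiles out of the intended order to circumvent the interlocking. The padding layers and the upper boundary layer are designed to prevent such interference, but the proof must carefully verify that the CFC-induced four-move-per-turn rigidity, combined with the tight budget $K = 4w$, leaves the escort no alternative route — in particular, that no white tile can bubble up through a padding row or be routed around a gadget to reach the exit prematurely. Formalizing this ``no shortcuts'' invariant, likely via a monotone potential (e.g., counting white tiles still below each layer boundary and showing it can only decrease in the intended way), is the technically delicate part; the rest is bookkeeping that mirrors the gadget construction in Section~\ref{sec:reduction}.
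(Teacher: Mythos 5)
Your high-level plan follows the same route as the paper's proof: invoke Lemma~\ref{l:mobgsp-scg} to work purely in \scg terms, read the assignment off from which literal channel is ``activated'' per variable, and argue that the endgame/vacuum/gravity machinery cannot fire until every clause tile has reached the satisfaction position. However, the proposal defers precisely the steps that constitute the proof, and the deferral is a genuine gap rather than routine bookkeeping. Three quantitative arguments are missing and none of them follows from the ``monotone potential'' you gesture at. First, you need to show that the white tiles in the clause layer can only be shifted out by collapses of tiles \emph{originally in the clause layer}: the paper does this by counting that the variables layer and below contain at most $q$ white tiles while the clause padding layer forces a tile from above to fall at least $q+1$ rows, which is impossible. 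Second, you need to show that a clause tile can reach the satisfaction position \emph{only} via the clause rope of an activated literal gadget; the paper's argument is a fall-distance computation showing that any ``cheating'' route (bottom rows of activated gadgets, partial pulls through deactivated gadgets, the control layer) yields a total descent strictly less than $h = 2N\ell + 4N + 2$, which is why $h$ and $\ell$ are defined as they are. Third, your claim that a win ``must commit to exactly one of the two literal channels'' is asserted by appeal to the construction's intent, but the actual reason is that activating $x_i$ and $\lnot x_i$ requires pulling down incomparable subsets of the variable columns, and a variable column cannot be pulled down a second time until its clause-layer white tile has been shifted right --- which cannot happen before the clause tiles have already fallen. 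Without that last step, nothing rules out a solution that activates both literals in sequence.

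A further structural point the proposal omits: the paper must also show that the clause-layer tiles can only be cleared after they are all aligned in a single row and collapsed left to right (otherwise some column is emptied of white tiles and everything to its left becomes permanently stuck), and that the endgame switch tile cannot be spent early because one of its two white tiles is reserved for the left state column and using the other would misalign the gravity layer. These interlocks are what force the temporal order ``satisfy all clauses, then clean up,'' and they are not consequences of the tightness of $K = 4w$ alone, which you lean on as the main source of rigidity. The budget argument lives in Lemma~\ref{l:mobgsp-scg}; the content of Lemma~\ref{l:mobgsp-sat} is the gadget-by-gadget counting, and that is what still needs to be written.
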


Again, the complete proof is given in the Appendix. As a sketch, this direction is more involved with the gist being that
by construction, a solution must collapse all the white tiles in the clause layer via. the accessible variable and control layers below before switching to the endgame mode to cleanup the rest of the white tiles.

\subsection{Concluding Hardness}
The polynomial-time construction of the \mobgsp instance from a 3SAT instance, combined with Lemma~\ref{l:mobgsp-scg}-Lemma~\ref{l:mobgsp-sat}, immediately implies that \mobgsp is NP-hard. Note that a feasible solution to the \bgsp instance in the \mobgsp has a makespan of $O(m^3)$. To see this, if we do not need to simulate the \scg, we can use the escort to ``bubble up'' each white tile in $G$ through all the layers. In a bubbling-up operation, the escort is teleported above the target white tile and swapped with the white tile (pulling it up), then moved around the same white tile to be above the white tile again and repeat. Bubbling up each white tile takes $O(m)$ steps and there are at most $m^2$ white tiles.
Alternatively, we could just use the algorithm for \gstp.
Therefore, \mobgsp is in the complexity class NP and we have:

\begin{theorem}\label{t:semobgsp}Single-escort \textup{\mobgsp} is NP-complete.
\end{theorem}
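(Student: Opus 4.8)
The plan is to combine the three lemmas already established (Lemma~\ref{l:mobgsp-scg}, Lemma~\ref{l:sat-mobgsp}, and Lemma~\ref{l:mobgsp-sat}) with a membership-in-NP argument. Concretely, I would argue as follows. First, \textbf{NP-hardness}: the construction in Section~\ref{sec:reduction} maps a 3SAT instance to a \mobgsp instance in time polynomial in $N$ and $M$ (the sizes $\ell, h, q, r, c, m$ are all polynomial in $N,M$, and the layer-by-layer description of $G$ is explicitly computable). Lemma~\ref{l:sat-mobgsp} gives the forward direction — a satisfying assignment yields a solution to the \mobgsp instance with makespan at most $K = 4w$ — and Lemma~\ref{l:mobgsp-sat} gives the reverse direction — any solution to the \mobgsp instance within makespan $K$ induces a satisfying assignment. (Lemma~\ref{l:mobgsp-scg} is the bridge that lets Lemma~\ref{l:mobgsp-sat} reason in terms of \scg turns.) Together these give a many-one reduction from 3SAT, so single-escort \mobgsp is NP-hard.

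Second, \textbf{membership in NP}: I would exhibit a polynomial-size witness and a polynomial-time verifier. The natural witness is the path set $P$ itself, but one must be careful that a feasible (not necessarily makespan-$K$) \bgsp solution can be chosen to have polynomial length. Here I would invoke the ``bubbling up'' argument sketched in the excerpt: with a single escort one can relocate each of the $O(m^2)$ white tiles into its goal position one at a time, each relocation costing $O(m)$ escort moves, for a total makespan of $O(m^3)$, which is polynomial in the input size $m$. (Alternatively, cite the polynomial-time \gstp algorithm.) Since any ``yes'' instance with bound $K$ in particular has \emph{some} feasible solution, and since verifying that a given path set $P$ of length $O(m^3)$ satisfies all five \cgsp constraints (continuous motion, completion at a contiguous color-block goal, no meet collision, no head-on collision, CFC) and has $T_P \le K$ can be done in time polynomial in $m$ and $|P|$, the decision problem \mobgsp lies in NP. Actually, for the decision version one wants a witness of a solution with $T_P\le K$; a ``yes'' instance by definition has such a path set, and by the completion/feasibility structure the witness has size $O(mK)$ or can be truncated/padded appropriately, so it is polynomial. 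Combining hardness and membership, single-escort \mobgsp is NP-complete.

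The main obstacle I anticipate is not in this theorem's proof — which is a short assembly of prior results plus a routine NP-membership check — but rather that it silently relies on the correctness of Lemmas~\ref{l:mobgsp-scg}--\ref{l:mobgsp-sat}, whose proofs (deferred to the Appendix) are where all the real work lives: establishing that the thick upper boundary layer plus the tight makespan $K = 4w$ genuinely force the escort to cycle white tiles counterclockwise one at a time, faithfully simulating \scg collapse turns, with no ``cheating'' shortcut. For the theorem statement itself, the only genuinely new care needed is to make the NP-membership argument airtight: namely, justifying that a feasible \bgsp solution of polynomially-bounded makespan always exists (so the verifier's input is polynomial) and that checking the CFC and head-on/meet-collision constraints step by step is polynomial. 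I would keep that part terse, pointing to the $O(m^3)$ bubbling-up bound already given in the excerpt, and then simply conclude.

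\begin{proof}
The reduction of Section~\ref{sec:reduction} produces, from any 3SAT instance on $N$ variables and $M$ clauses, a single-escort \mobgsp instance whose board size $m = r + 2rc + 1$ and bound $K = 4w$ are polynomial in $N$ and $M$, and the initial/final configurations are computable in polynomial time. By Lemma~\ref{l:sat-mobgsp}, a satisfying assignment yields a feasible path set with makespan at most $K$; conversely, by Lemma~\ref{l:mobgsp-sat} (using Lemma~\ref{l:mobgsp-scg}), any feasible path set with makespan at most $K$ induces a satisfying assignment. Hence the map is a valid many-one reduction and single-escort \mobgsp is NP-hard.

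For membership in NP, observe that any feasible \bgsp instance admits a solution of makespan $O(m^3)$: the single escort can be teleported above each white tile in $G$ and repeatedly ``bubble it up'' into place using $O(m)$ moves per tile, with at most $m^2$ white tiles total (equivalently, apply the polynomial-time \gstp algorithm). Thus a path set $P$ of length $O(m^3)$ serves as a polynomial-size witness, and checking that $P$ satisfies continuous uniform motion, completion into the specified sorted goal configuration, no meet collision, no head-on collision, and the corner-following constraint — and that $T_P \le K$ — can be done in time polynomial in $m$. Therefore single-escort \mobgsp is in NP, and combined with NP-hardness it is NP-complete.
\end{proof}
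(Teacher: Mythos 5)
Your proposal is correct and follows essentially the same route as the paper: NP-hardness by assembling the polynomial-time construction of Section~\ref{sec:reduction} with Lemmas~\ref{l:mobgsp-scg}--\ref{l:mobgsp-sat}, and NP-membership via the $O(m^3)$ bubbling-up bound (or the \gstp algorithm) guaranteeing a polynomial-size verifiable witness. The only difference is that you spell out the witness-size and verification details slightly more explicitly than the paper does, which is harmless.
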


Since $\mocgsp$ contains $\mobgsp$ instances with the \gstp algorithm still applicable, we get

\begin{corollary}
Single-escort \textup{\mocgsp} is NP-complete.
\end{corollary}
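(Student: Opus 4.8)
The plan is to combine a hardness-by-restriction argument with a membership argument that mirrors the one just used for Theorem~\ref{t:semobgsp}. For NP-hardness, I would observe that \bgsp is exactly the $k = 2$ special case of \cgsp, so every \mobgsp instance is literally a \mocgsp instance. Since Theorem~\ref{t:semobgsp} establishes that single-escort \mobgsp is NP-hard, the hardness transfers immediately to the more general single-escort \mocgsp: any algorithm deciding \mocgsp would in particular decide all of its \mobgsp sub-instances, so the $k=2$ restriction already carries the full 3SAT reduction.

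For membership in NP, I would argue that the $O(m^3)$ makespan upper bound carries over to an arbitrary number of colors. Given any \cgsp instance, one can fix a labeled target consistent with the required color blocks (assign arbitrary but fixed distinct labels to the same-colored goal cells) and then solve the resulting \gstp instance; any feasible labeled solution is automatically a feasible colored solution. Since the \gstp algorithm (equivalently, the bubbling-up procedure described for \mobgsp) produces a feasible path set of makespan $O(m^3)$, the optimal makespan of any solvable \mocgsp instance is polynomially bounded. Consequently, a YES instance admits a path set of polynomial length, which serves as a polynomial-size certificate whose continuous-motion, meet-collision, head-on, corner-following, and (color-wise) completion constraints can all be verified in polynomial time.

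Putting the two halves together yields NP-completeness. The only point that warrants care is the completion check in the colored setting: unlike in \mobgsp, one must confirm merely that each tile lands in a goal cell of its own color rather than at a single fixed labeled position, but this remains a polynomial-time test over the final configuration. I do not anticipate any genuine obstacle here, as the corollary is a routine consequence of Theorem~\ref{t:semobgsp} together with the polynomial makespan bound; the sole thing to get right is the bookkeeping that both the $k=2$ restriction and the \gstp-based upper bound survive verbatim when passing from \bgsp to \cgsp.
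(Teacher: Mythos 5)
Your proposal is correct and matches the paper's argument: the paper likewise obtains hardness because single-escort \mocgsp contains the \mobgsp instances of Theorem~\ref{t:semobgsp}, and membership in NP because the \gstp algorithm still applies to yield a polynomially bounded makespan certificate. The extra remark about color-wise completion checking is fine but not needed beyond what the paper states.
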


The construction can also be used to show that \mopgsp \emph{with arbitrary goal configurations} is NP-complete rather than the sorted goal configuration variant, since we do leverage the permutability of the black tiles in the top row.

\begin{corollary}
    Single-escort \textup{\mopgsp} with arbitrary goal configurations is NP-complete.
    The problem when restricting to goal configurations of corners of the grid (as rectangles) still remains NP-complete.
\end{corollary}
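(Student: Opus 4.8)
The plan is to reduce from single-escort \mobgsp, which Theorem~\ref{t:semobgsp} shows is NP-complete, by reusing the very same 3SAT-derived board $B$ but reinterpreting its two colors as a \pgsp coloring. Concretely, I would let the white tiles constitute the single large anonymous group (color $k$) and promote each black tile to its own unique color, so that $k-1$ equals the number of black tiles. The goal is taken to be the sorted \bgsp goal (white tiles flushed to the right, black tiles filling the left block), except that each now-unique black tile is assigned a designated cell inside the left block; because the black tiles carry distinct labels this is genuinely an \emph{arbitrary} goal rather than a color-layered one. NP membership is inherited verbatim from the \mobgsp discussion: the bubbling-up routine (or the \gstp algorithm) certifies an $O(m^3)$ feasible makespan, so a solution is a polynomial-size witness.

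The reverse direction is immediate by projection. Given any makespan-$K$ solution to the \pgsp instance, the same path set, read with all the distinct black colors collapsed into a single ``black'' color, is a feasible solution to the original \bgsp instance of the same makespan: the paths are unchanged, so every motion, meet, head-on, and CFC constraint still holds, and the black tiles land in the left block (their \bgsp goal) while the white tiles fill the right. Lemmas~\ref{l:mobgsp-scg} and~\ref{l:mobgsp-sat} then apply to this projected solution and extract a satisfying assignment of the 3SAT instance, so \pgsp-solvability in makespan $K=4w$ implies 3SAT-satisfiability.

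The forward direction is the crux and the step I expect to be the main obstacle. A satisfying assignment produces, via Lemma~\ref{l:sat-mobgsp}, a \bgsp solution of makespan exactly $K=4w$ that cycles the $w$ white tiles out one at a time; the difficulty is that I must simultaneously deliver each \emph{uniquely labeled} black tile to its designated goal cell \emph{without spending any additional moves}, since the makespan budget is already tight. This is exactly where I would \emph{leverage the permutability of the black tiles in the top row}: as each white tile is cycled counterclockwise over the upper boundary, the black tiles it displaces in the top row are reordered at no cost, and the $w$ cycling turns supply enough of these adjacent-transposition-type reorderings to realize any prescribed permutation of the black tiles within the left block. The technical heart of the proof is therefore a routing argument showing that the degrees of freedom in \emph{how} (as opposed to whether) the canonical optimal cycling is performed are rich enough to hit an arbitrarily prescribed final black arrangement at zero makespan overhead; combined with the reverse direction this establishes the first claim.

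For the second claim I would instantiate the designated black-tile goal as a single rectangular block anchored in a corner of the grid, so that both the anonymous white group (a large corner rectangle on the right) and the union of the unique tiles occupy rectangular corner regions. Rerunning the same forward routing argument with this particular corner-rectangular target as the prescribed permutation---rather than an unconstrained one---shows that hardness persists even when goals are restricted to corner rectangles, completing the corollary.
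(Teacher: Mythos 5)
Your NP-membership argument and your reverse (projection) direction are fine and match the paper. The gap is in the forward direction, where you set yourself a harder task than necessary and one the construction cannot support. Because the corollary concerns \mopgsp with \emph{arbitrary} goal configurations, the reduction gets to \emph{choose} the goal; the paper simply defines each labeled tile's goal cell to be the position it reaches under the canonical winning play of Lemma~\ref{l:sat-mobgsp} (the labeled tiles ``falling down'' in their columns following their initial orders), which makes the forward direction immediate with no routing argument at all. You instead propose to show that the slack in the optimal cycling is rich enough to deliver the black tiles to an \emph{arbitrarily prescribed} arrangement at zero makespan overhead. That claim is almost certainly false: the proof of Lemma~\ref{l:mobgsp-scg} shows that under the budget $K=4w$ essentially every escort jump is forced (every row jump must cross $\ell_v$, every four-jump cycle simulates exactly one \scg turn), so the only freedom is \emph{which} open tile to collapse; the trajectories of the interior black tiles are then fully determined by the collapse sequence, and the top-row tiles merely shift left in order, so there is no mechanism for realizing arbitrary reorderings. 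Relatedly, the paper replaces the top row's black tail with white tiles precisely to avoid having to account for those labeled tiles' forced displacement --- a modification you omit.

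The corner-rectangle claim exhibits the same quantifier problem in sharper form: you cannot simply ``prescribe'' a corner rectangle as the goal, because the final positions of the labeled tiles under any budget-$K$ solution are essentially forced, and the residual black region after the \scg win is \emph{not} a rectangle (different columns of $H$ contain different numbers of white tiles, so the black tiles settle at different heights). The paper's fix is again to modify the instance --- add white tiles above the strip in the reward columns so that every column of the \scg board carries the same number of white tiles --- after which the forced final configuration genuinely is a corner rectangle.
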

\begin{proof}
    Again, $\mopgsp$ is in NP since we can apply the \gstp algorithm.
    We use nearly the same reduction as for \mobgsp, but modify it slightly.
    By replacing the top row with the tail with white tiles and by setting the goal configuration to be the one in which all the black tiles move downwards in their respective column, one can verify that the hardness for \mobgsp with arbitrary goal configurations is still NP-hard, and it can be easily adapted to a \mopgsp instance by labeling the black tiles, leaving the white tiles as the unlabeled tiles, with the goal instance having all the labeled tiles in the initial instance ``falling down'' following their initial orders.

    To show the hardness when moving labeled tiles to a corner, one can add white tiles above the strip of white tiles in the reward columns so that all the columns of the \scg instance have the same number of white tiles, adjusting the dimensions of the instance slightly without affecting the \scg simulation.
    Then the goal configuration will be a rectangle of labeled tiles in the bottom left corner.
    Furthermore, by simply padding with extra labeled and unlabeled tiles, one can further restrict the dimensions of the corner to have dimensions with fixed powers of $m$, even nearly sorted corner configurations, e.g. $(m - m^\epsilon) \times (1/2)m$ for $0 < \epsilon < 1$ a constant.

~\qed
\end{proof}

In single-escort \cgsp, let the fraction of tiles for a color $i \in \{1, \ldots, k\}$ among all tiles be $f_i$, $\sum_{1\le i \le k}f_i = 1$. 
For example, in a \bgsp, it can be that black tiles and white tiles each have $f_i \approx 0.5$.
We can even take the $f_i$ to have fixed powers of $m$.
We can extend our proof strategy to show that for a fixed color $k$, single-escort \mocgsp remains NP-complete with the distribution of $\{f_1, \ldots, f_k\}$ arbitrarily close to any fixed distribution. 
Essentially, we can first pick two proper colors $i, j$ as black and white, and construct a \mobgsp instance from a 3SAT instance done in Sec.~\ref{sec:reduction}. Then, we can expand the \mobgsp instance on the top and the right and assign the added tiles proper colors (possibly more black and white tiles as well) to approximate the desired colored tile ratios. The goal configurations of these added tiles are the same as their initial configurations. 
For example, we may expand the $m\times m$ grid to an $m^{\gamma} \times m^{\gamma}$ grid for arbitrary but fixed ${\gamma}$; the construction time and instance checking time remain polynomial with respect to the original 3SAT instance size. 
We note that minor cautions are needed to maintain the \scg mechanics. We have shown: 

\begin{theorem}
Single-escort \textup{\mocgsp} remains NP-complete if tiles of color $i \in \{1, \ldots, k\}, k \ge 2$ occupy an approximately $f_i$ fraction of the puzzle game board. 
\end{theorem}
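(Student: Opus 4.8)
The plan is to build on the single-escort \mobgsp reduction from Sec.~\ref{sec:reduction} and the already-established hardness of single-escort \mobgsp (Theorem~\ref{t:semobgsp}), then ``pad'' that instance so that the desired approximate color ratios $\{f_1,\ldots,f_k\}$ are realized without disturbing the \scg simulation. First I would fix two of the $k$ colors, say colors $i$ and $j$, to play the roles of ``black'' and ``white'' in the Sec.~\ref{sec:reduction} construction, and build the $m\times m$ board $B$ exactly as there. Then I would enlarge the board to an $m^{\gamma}\times m^{\gamma}$ grid (for a suitable fixed rational $\gamma$ chosen so that the desired fractions are achievable up to the claimed accuracy), adding the new cells along the top and the right of $B$ only, so that the original $G$, $H$, and the escort corner are untouched in their relative geometry. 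The added cells are populated with tiles of the various colors in the exact counts needed to make each color-$c$ tile population a $\approx f_c$ fraction of the total; crucially, each added tile is given the \emph{same} goal location as its start location, so the reconfiguration task for those tiles is trivial. Since $\gamma$ is a fixed constant, the blow-up is polynomial, so the construction and the solution-checking both remain polynomial in the size of the 3SAT instance, and membership in NP still follows from applying the \gstp algorithm (which has a polynomial makespan upper bound as noted before Theorem~\ref{t:semobgsp}).

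The correctness argument then has two halves. For one direction, a satisfying 3SAT assignment induces (via Lemma~\ref{l:sat-mobgsp}) a solution to the original \mobgsp instance that never moves any tile outside $B$; extending this path set by keeping every added tile stationary gives a solution to the padded \mocgsp instance within the same makespan budget $K$. For the other direction, I would argue that in the padded instance the makespan bound $K$ (kept equal to $4w$, possibly with a small additive adjustment if a few extra white tiles are unavoidably introduced to seed the \scg mechanics near the new top/right boundary) is still so tight that the escort has no slack to disturb the added region: the same counting argument behind Lemma~\ref{l:mobgsp-scg} — that every white tile must be cycled counterclockwise around the thick black boundary one at a time, costing four moves per \scg turn — forces the solution to confine itself to simulating the embedded \scg, so it restricts (by deleting the stationary added tiles) to a solution of the original \mobgsp instance, hence (Lemma~\ref{l:mobgsp-sat}) to a satisfying assignment. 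Thus the padded \mocgsp instance is solvable within $K$ iff the 3SAT instance is satisfiable.

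The main obstacle — and the ``minor cautions'' alluded to in the statement — is ensuring that the padding does not break the \scg mechanics encoded in the original construction. Two subtleties stand out. First, the \mobgsp reduction relies on the escort sitting in the top-right corner and on the ``passageways'' along the top and right edges of $B$; when we extend $B$ to the larger grid, these passageways must be re-routed to run along the new top and right edges, which means the added region cannot be an arbitrary block of tiles but must include a thin white/black channel plumbed so that collapsing an open \scg tile still costs exactly four escort moves and white tiles still bubble out along a counterclockwise cycle. I would handle this by making the added strips structurally identical to the original boundary/tail layers (a solid black wall with a white ``return channel'' of the correct length), just scaled up, and then re-verify the four-moves-per-turn invariant for the enlarged rectangle. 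Second, one must keep careful track of $w$ (the white-tile count / number of turns to win) after padding so that $K=4w$ remains exactly the win threshold; since any white tiles added purely for plumbing are accounted for in $w$, and all other added tiles are given trivial (stationary) goals, this bookkeeping is routine but must be stated. Aside from these, the argument is a direct lift of the \mobgsp proof, so I expect the bulk of the work to be in stating the padding scheme precisely rather than in any new idea.
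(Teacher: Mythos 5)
Your proposal matches the paper's argument essentially verbatim: pick two of the $k$ colors to serve as black and white, run the Sec.~\ref{sec:reduction} reduction, pad the board on the top and right to an $m^{\gamma}\times m^{\gamma}$ grid with added tiles of the appropriate colors whose goal positions equal their start positions, and note that the passageway/counting mechanics of the \scg simulation must be preserved. Your elaboration of the ``minor cautions'' (re-routing the top/right channels and re-accounting $w$ so that $K=4w$ stays tight) is a more explicit rendering of exactly the caveat the paper itself flags, so this is the same proof.
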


By adding additional independent escort jobs, hardness results hold for an arbitrary number $p$ of escorts, as long as $p$ is polynomial in the size of the problem.

\begin{theorem}\label{t:pescorts}
For $p \ge 2$ and polynomial in problem size, $p$-escort \textup{\mobgsp} and thus $p$-escort \textup{\mocgsp} remain NP-complete. 
\end{theorem}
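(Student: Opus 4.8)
The plan is to reduce from the single-escort $\mocgsp$ hardness established in the previous theorems (or directly from the $\mobgsp$ construction of Sec.~\ref{sec:reduction}) by showing that an instance with one ``working'' escort can be padded into an instance with $p$ escorts so that the additional $p-1$ escorts are forced to do trivial, independent work that contributes nothing to the makespan budget. Concretely, I would start from the $m \times m$ single-escort $\mobgsp$ instance $B$ with makespan threshold $K = 4w$, and embed it into a larger board, reserving a separate region for each of the $p-1$ extra escorts. Each extra escort gets its own little ``job'' — e.g.\ a short corridor of tiles it must shift by one, or simply a region where its initial and goal positions already coincide — designed so that (i) the region is geometrically disjoint from $B$ and from the other escorts' regions, (ii) no tile from one region can reach another region within the makespan budget, and (iii) the extra escorts cannot ``help'' the $\mobgsp$ region because reaching it would itself cost more than $K$ moves. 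The new threshold $K'$ is then essentially $K$ (padded by the constant cost of the trivial jobs), and the combined instance is solvable within $K'$ iff the original single-escort $\mobgsp$ instance is solvable within $K$.

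The key steps, in order: first, fix the enlarged board dimensions — something like an $(m + c(p-1)) \times m$ grid or an $m \times m$ grid with $p-1$ small disjoint pockets carved out far from $H$ and from the counterclockwise cycling path used by the main escort, so that the buffer of black tiles (analogous to the upper boundary layer) isolates each region. Second, specify the initial and goal configurations of the padding regions so that each extra escort has a forced, bounded-length task — the cleanest choice is to surround each extra escort with black tiles on a tiny sub-board where the goal is identical to the start, forcing it to stay put (makespan $0$ contribution), while making sure this padding does not disturb the ``sorted goal = goal'' property that the main construction relies on. Third, argue the forward direction: a satisfying 3SAT assignment gives a solution to the single-escort $\mobgsp$ (Lemma~\ref{l:sat-mobgsp}), which extends to the $p$-escort instance by keeping the extra escorts idle. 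Fourth, argue the reverse direction: any solution to the $p$-escort instance within $K'$ must, when restricted to the region $B$, be realizable using only one escort, because the other escorts are too far away (distance $> K'$) to enter $B$; hence it yields a winning set of $\scg$ turns and a satisfying assignment via Lemma~\ref{l:mobgsp-sat}. Fifth, note membership in NP is immediate since the $\gstp$ algorithm still applies and gives polynomial makespan, and that the whole construction is polynomial provided $p$ is polynomial in the input size (which is exactly the hypothesis).

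The main obstacle I expect is step four — the separation argument — because in $\mocgsp$ many tiles move simultaneously in a single step (the ``teleportation'' of escorts along straight lines noted in the Remark), so ``distance'' is not simply the graph distance: a single move can shift an entire row, so one must be careful that the extra escorts genuinely cannot propagate influence into $H$ within $4w$ steps. The remedy is to place a sufficiently thick wall of black tiles (of width exceeding $K' = \Theta(m^3)$, which is fine since we are free to enlarge $m$ polynomially) between each padding pocket and the working region, exploiting exactly the mechanism already used for the upper boundary layer: a black block that thick cannot be penetrated or traversed within the budget, so the extra escorts are confined to their pockets. A secondary subtlety is ensuring the goal configuration of the enlarged instance remains ``the sorted configuration'' (or the appropriate corner configuration) so that the argument of Lemma~\ref{l:mobgsp-scg} — that the main escort must cycle white tiles counterclockwise — carries over unchanged; this is handled by appending the padding regions' (already-sorted) black tiles in a way consistent with row-major order, and adjusting the black-tile ``tail'' above $G$ accordingly. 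Once the walls and the sorted-goal bookkeeping are in place, the equivalence follows by the same three lemmas, and NP-completeness for all polynomial $p \ge 2$ is established.
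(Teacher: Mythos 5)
Your plan breaks at step four, the separation argument. In \bgsp an escort is an empty cell, and a single synchronized move slides an entire contiguous line of tiles by one cell, which ``teleports'' the escort across the full length of that line. So a wall of black tiles of \emph{any} thickness does not confine an escort: it crosses the wall in one jump, and since black tiles are interchangeable, tunneling through a solid black region does not even perturb the configuration (a return trip along the same line restores everything). Every cell of the board is reachable by an escort in $O(1)$ jumps, so ``distance $> K'$'' is never a barrier. The upper boundary layer in the paper's construction works for a different reason than the one you are borrowing: it blocks \emph{white tiles} from being advanced through it, because each column jump raises a given white tile by at most one cell and only $2w < 2rc$ column jumps exist in the budget; the escort itself crosses that layer on every four-jump cycle. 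Consequently your $p-1$ ``idle'' escorts cannot be kept out of $H$, and they bring $4w(p-1)$ surplus jumps with them. The entire correctness of Lemma~\ref{l:mobgsp-scg} is a tight counting argument (exactly $2w$ row jumps, each forced to cross $\ell_v$; exactly $2w$ column jumps, none to spare for bubbling white tiles through the boundary), and that accounting collapses as soon as any slack exists. An escort whose start equals its goal is free to wander and return; nothing forces it to stay put.

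The paper closes this hole in the opposite way: rather than isolating the extra escorts, it assigns them \emph{mandatory, expensive} independent jobs --- $2wp$ protruding black tiles to absorb and $2w(p-1)$ white tiles to expel, spaced $8w$ apart so the jobs cannot share escort corners --- calibrated so that of the $p \cdot 4w$ escort corners available within the unchanged threshold $K = 4w$, exactly $4wp - 2w$ must be spent on these jobs, leaving exactly $2w$ for the \scg region, i.e.\ two corners per white-tile shift-out and hence exactly $w$ faithful \scg turns. Your instinct to add per-escort independent jobs is the right starting point, but the jobs must saturate the extra escorts' entire move budgets; making them trivial and relying on walls or distance is precisely the step that fails.
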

\begin{proof}
Construct an instance as in Fig. \ref{fig:mobgsp-multi}, placing an SCG instance rotated 90 degrees counterclockwise in the bottom right corner of the grid. 
Fill the area left of it with black tiles except with some small constructed independent escort jobs, making all other tiles white except for the escorts in the top left corner.
Using the same variables as in the \mobgsp reduction, the \scg instance now occupies a $c \times r$ subgrid, with independent escort jobs placed at a distance of $8w$ from each other and the \scg to ensure that they remain independent from one another, with $2wp$ black tiles needed to be absorbed into the black tiles and $2w(p-1)$ white tiles needed to be moved out from it, altogether occupying $8w(4wp - 2w)$ columns.
Additionally, add $p$ extra columns for the starting positions of the escorts, aligned on the main diagonal from the top left corner to ensure that they do not conflict with one another.
Thus, there will be $p+1$ rows of white tiles along the top.
The final configuration will be the sorted configuration with the black tiles on the bottom and escorts placed above some subset of the protruding black tiles, with the same total makespan $K = 4w$.

Now we analyze based on the number of escort ``corners,'' which are the positions at which the escorts stop throughout the movement.
In this instance, each of the escort jobs, as well as moving a white tile out of the \scg instance, will have some corresponding escort jump in the same column that either moves the white tile out or the black tile in, using two escort corners.
Furthermore, all of these corners among the escort jobs and between those and the ones for the \scg instance will be disjoint, since they are placed at a distance of $8w$ from each other.
Since the total number of escort corners is $p(4w)$, $4wp - 2w$ are already used on the independent escort jobs, leaving $2w$ left for the \scg instance, each shifting out a white tile requiring two corners.
Thus, the \scg instance permits $w$ escort moves, each of which results in the escort jumping to the left below a protruding black tile, effectively simulating a turn of the \scg instance.
Thus, a successful tile routing will simulate the \scg instance, and a solution to the \scg instance readily results in a successful tile routing by setting $p-1$ of the escorts to fulfill the independent escort jobs and the other escort to simulate the \scg instance and resolve its share of $w$ protruding black tiles.
Minor modifications may be needed to ensure that movement goes as planned, e.g. if $p$ is sufficiently large, then we would need to widen the sea of black tiles on the bottom so that escort jumps can be made to not conflict with one another.
~\qed
\end{proof}

\begin{figure}[h]
    \centering
    \includegraphics[width=\textwidth]{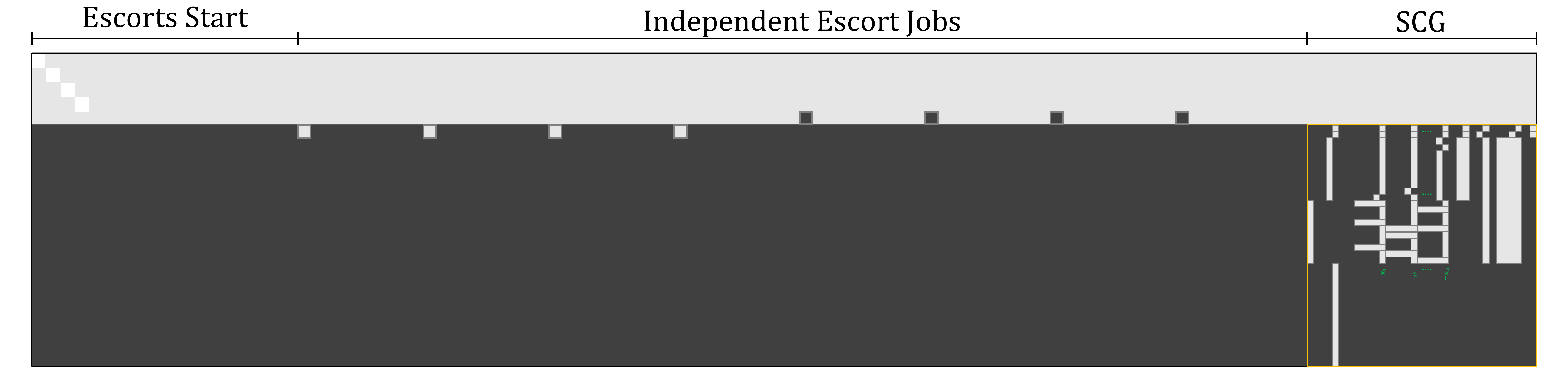}
    \caption{Illustration of handling multi-escort \mobgsp instances by adding additional independent escort jobs.
    Only the lower part of the overall grid is shown, with the upper part truncated.
    Not drawn to scale.}
    \label{fig:mobgsp-multi}
\end{figure}

\begin{corollary}\label{c:mopgsp-multi}
For $p \ge 2$ and polynomial in problem size, $p$-escort \textup{\mopgsp} with arbitrary goal configurations is NP-complete.
Again, the problem when restricting to goal configurations of corners of the grid (as rectangles) still remains NP-complete.
\end{corollary}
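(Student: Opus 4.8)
The plan is to overlay the two modifications we have already made separately: the reduction behind Theorem~\ref{t:pescorts} (which turns a 3SAT instance into a $p$-escort \mobgsp instance with makespan bound $K = 4w$ by placing a $90^\circ$-rotated \scg in the bottom-right corner together with $p-1$ independent escort jobs), and the relabeling trick used for the single-escort \mopgsp corollary (which replaces the sorted-goal requirement by an arbitrary ``fall straight down'' goal and then promotes the black tiles to distinctly labeled tiles, leaving the white tiles as the one anonymous color). Concretely, I would take the $p$-escort instance of Fig.~\ref{fig:mobgsp-multi}, drop the top strip's role as a sorting enforcer, label every black tile with a distinct label to obtain a \pgsp instance (the white tiles form the unique anonymous color class; escorts are not tiles), and declare the goal configuration to be the one produced by the canonical schedule below, which one checks amounts to the labeled tiles of each column descending to the bottom of the black sea in their original top-to-bottom order, with the white tiles and the $p$ escorts occupying the remaining cells exactly as in that schedule. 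The makespan bound stays $K = 4w$.

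For correctness I would argue both directions. In the reverse direction, any feasible path set of makespan $\le K$ reaching this goal in particular reaches a configuration in which all white tiles have been evacuated past the black sea; hence by the corner-counting argument of Theorem~\ref{t:pescorts} it must dedicate $4wp - 2w$ escort corners to the independent jobs and spend its remaining $2w$ corners simulating $w$ turns of the embedded \scg, so the \scg is won and, via the \mobgsp-to-3SAT direction captured by Lemma~\ref{l:mobgsp-sat}, the 3SAT instance is satisfiable — relabeling only strengthens the goal, so nothing here breaks. In the forward direction, a satisfying assignment yields, through the forward construction already used for Theorem~\ref{t:pescorts}, an explicit schedule in which $p-1$ escorts clear the independent jobs and one escort simulates the \scg win; I would then verify that in this schedule every labeled tile terminates in a well-defined cell — within each column the black tiles descend monotonically as white tiles are shifted out from below, the tiles absorbed by the independent escort jobs land in fixed cells, and \scg collapse turns preserve relative row/column orders — so the reached configuration is exactly the goal we committed to. Membership in NP is immediate: $K = 4w$ is polynomial, so a witnessing path set has polynomial size and is checkable in polynomial time (equivalently, the \gstp algorithm certifies solvability), yielding NP-completeness.

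For the corner (rectangular-goal) variant I would reuse the padding from the single-escort \mopgsp corollary: add a block of white tiles above the white strip in the reward columns (and, if needed, above each column) so that every column of the \scg subgrid contains the same number of white tiles; then once the white tiles are evacuated the labeled tiles occupy a solid rectangle in the appropriate corner, and surrounding this with extra labeled and anonymous padding tiles lets the corner be forced to have side lengths equal to prescribed powers of $m$ (e.g.\ $(m - m^\epsilon)\times(1/2)m$ for constant $0 < \epsilon < 1$), exactly as before, without disturbing the \scg mechanics. The main obstacle I anticipate is purely bookkeeping: checking in the forward direction that the canonical multi-escort schedule really terminates at the \emph{specific} labeled goal we fixed — in particular that the column order of the labeled tiles is preserved throughout and that the interaction between the \scg-simulating escort and the independent escort jobs never permutes labeled tiles in an unintended way. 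Once this is pinned down, NP-hardness follows directly from Theorem~\ref{t:pescorts} (equivalently, from the hardness of the embedded \scg).
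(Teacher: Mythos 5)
Your high-level strategy (overlay the multi-escort gadget of Theorem~\ref{t:pescorts} with the labeling trick from the single-escort \mopgsp corollary) is the same as the paper's, and your reverse direction is sound: reaching any labeled goal in $\le K$ steps in particular reaches the color-collapsed configuration, so the corner-counting argument and Lemma~\ref{l:mobgsp-sat} carry over. The gap is in how you specify the goal configuration. You define it as ``the one produced by the canonical schedule,'' but that schedule depends on a satisfying assignment, which is not available at reduction time; a many-one reduction must output the goal in polynomial time from the 3SAT instance alone. Your patch --- asserting that the outcome is assignment-independent because ``labeled tiles of each column descend in their original top-to-bottom order'' --- is exactly the claim that needs proof, and it is not obviously true in the unmodified Fig.~\ref{fig:mobgsp-multi} gadget: \scg collapse turns shift tiles laterally (toward the open boundary) as well as along the falling direction, the \scg-simulating escort in Theorem~\ref{t:pescorts} consumes protruding black tiles whose identities and final cells depend on how its jumps interleave with the turns, and the independent escort jobs displace entire generalized columns of now-labeled tiles whose net permutation is not forced to a unique value by the makespan budget. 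Calling this ``purely bookkeeping'' understates it; it is the actual content of the corollary beyond Theorem~\ref{t:pescorts}.

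The paper resolves precisely this by re-engineering the gadget rather than reusing it verbatim: it reflects the \scg to face right and introduces a dedicated column of white tiles as the return route for the \scg-simulating escort (so that escort no longer needs protruding black tiles, only $2w(p-1)$ remain for the $p-1$ independent-job escorts, and the labeled tiles in the \scg area end in an explicit ``falling down, now to the left'' configuration); and it pairs the independent escort jobs and connects each pair by nonintersecting grid-aligned paths, labeling the black tiles so that each one either stays put or moves exactly one step along its path. This makes the labeled goal explicit, polynomial-time computable, forced by the corner budget, and independent of which satisfying assignment is used. You should adopt these (or equivalent) modifications; without them the forward direction of your reduction is not established. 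Your treatment of the corner/rectangular-goal variant and of NP membership matches the paper and is fine.
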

\begin{proof}
Again, to deal with the issue of the labeled tiles being different, reflect the \scg instance to face the right, pad it with extra black tiles, and introduce a column of white tiles on the right.
Since we intend the escort simulating the \scg instance to use that side route instead, we only need $2w(p-1)$ of the protruding black tiles instead.
To ensure that the escort movements left of the \scg don't interfere with one another, we can pair the independent escort jobs from the middle and connect the pairs via nonintersecting grid-aligned paths, from which we can then label the black tiles from their starting configuration and goal configuration such that they move once if they are on a path and remain stationary if not.
Then the starting and goal configurations in the \scg area following the ``falling down,'' though now to the left, procedure, with the escorts ending up above the protruding black tiles as well as the top right corner.

Again, we have at most $4w$ escort corners reserved for the \scg instance, and pulling out a white tile comes with two escort corners + two more to move an escort above the boundary between the black and white tiles, so again we will have a faithful simulation of the \scg instance.
A solution to the \scg instance will readily have a corresponding solution to the \mopgsp instance.
The one can make minor modifications again to obtain the corner goal configuration requirement as needed.

~\qed
\end{proof}

The hardness results can be extended to high-dimensional grids by simply simulating the problem in one of the 2D slices, padding the additional dimensions with black tiles:

\begin{theorem}
For $p \ge 1$ and polynomial in the problem size, $p$-escort \textup{\mobgsp} and $p$-escort \textup{\mocgsp} remain NP-complete on $d$-dimensional grids for fixed integer dimensions $d \ge 2$.
The NP-completeness also holds for $p$-escort \textup{\mopgsp} with arbitrary (or corner) goal configurations.
\end{theorem}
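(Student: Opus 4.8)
\emph{Proof plan.} The plan is to reduce from the two-dimensional versions that have already been shown NP-complete — single-escort \mobgsp/\mocgsp (Theorem~\ref{t:semobgsp} and the corollaries following it), $p$-escort \mobgsp/\mocgsp for $p \ge 2$ (Theorem~\ref{t:pescorts}), and \mopgsp with arbitrary/corner goals (Corollary~\ref{c:mopgsp-multi} and its single-escort analogue) — and to embed such a 2D instance into a single two-dimensional slice of a $d$-dimensional grid, padding the remaining $d-2$ dimensions with black tiles that never need to move. Concretely, given a 2D instance on an $m \times m$ board with makespan bound $K$, I would build an $m \times m \times L \times \cdots \times L$ grid $B_d$ ($d-2$ trailing factors of size $L$, with $L$ a small constant, or $L = m$ for a cubical grid), place the original start and goal configurations on the \emph{active slice} $S$ consisting of the cells whose last $d-2$ coordinates are all $1$, and put a black tile (an unlabeled, color-$k$ tile in the \mopgsp case) on every cell of $B_d \setminus S$ in \emph{both} the start and the goal configuration, keeping the same bound $K$. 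This construction is polynomial in $m$ for fixed $d$, hence polynomial in the size of the original 3SAT instance; one checks that the combined goal remains of the required layered/sorted (or corner) form, after at most a trivial choice of the padding dimensions.

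Two things then require proof. First, membership in NP: exactly as in all earlier cases, the $d$-dimensional \gstp algorithm yields a feasible solution of makespan polynomial in $|B_d| = \Theta(m^d)$, so every YES-instance has a polynomial-size certificate, and checking a candidate path set against the $d$-dimensional continuity, meet-, head-on-, and corner-following constraints is polynomial. Second, correctness of the reduction. The forward direction is immediate: any 2D solution extends to $B_d$ by leaving every padding tile fixed, all movement vectors then lie in the $xy$-plane, and the $d$-dimensional CFC restricted to such vectors is exactly the 2D CFC. For the converse, I would establish the key invariant that \textbf{no solution of makespan at most $K$ ever moves a tile or an escort out of the active slice $S$}. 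Granting this invariant, all escorts and all labeled tiles remain in $S$ throughout, the padding is never disturbed, and the $d$-dimensional solution is, verbatim, a feasible 2D solution of makespan at most $K$, so the original 2D instance is a YES-instance.

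To prove the invariant I would argue from the tightness of $K$ in the 2D constructions: $K = 4w$ is precisely the time budget within which the embedded task can be completed, with every step accounted for — either as one quarter of an \scg turn (Lemma~\ref{l:mobgsp-scg}), or, in the multi-escort case, through the disjoint escort-corner budget of Theorem~\ref{t:pescorts}. Since $B_d \setminus S$ is a monochromatic block identical in the start and goal configurations, no rearrangement there is ever required; and any time step in which an escort crosses between $S$ and its complement is wasted, because while an escort sits outside $S$ the in-slice configuration has strictly fewer escorts, hence strictly weaker capabilities (a padding black tile entering $S$ in its place leaves the in-slice color pattern unchanged). Therefore such a step cannot occur without overrunning $K$, which proves the invariant. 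Finally, the same padding transfers the \mopgsp statements: label the in-slice black tiles exactly as in Corollary~\ref{c:mopgsp-multi}, keep every padding tile unlabeled and stationary, and choose the padding dimensions so that the union of the padding block with the in-slice corner target is again a $d$-dimensional box for the corner-configuration variant.

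The main obstacle is the multi-escort case of the invariant: one has to verify that the careful corner-budget bookkeeping behind Theorem~\ref{t:pescorts} survives the addition of the extra dimensions, i.e., that the monochromatic padding cannot be exploited for a cheaper escort routing (for instance a shortcut across a higher-dimensional face) and that temporarily swapping an escort with a padding tile — which leaves the in-slice color pattern untouched — genuinely confers no progress under a makespan-tight budget. Everything else (membership, the forward direction, and the single-escort invariant, where the count is trivial: one escort, $4w$ steps, each needed for the $w$ \scg turns) is routine.
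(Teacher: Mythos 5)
Your proposal is exactly the paper's approach: the paper proves this theorem only by the one-sentence remark that the 2D instance is simulated in a single 2D slice with the remaining dimensions padded by black tiles, which is precisely your construction. Your elaboration of the slice-confinement invariant and the makespan-tightness argument goes beyond what the paper writes down, and correctly identifies (in your ``main obstacle'' paragraph) the bookkeeping that would need to be checked in a fully detailed proof.
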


\section{Makespan Bounds}
We proceed to provide makespan lower and upper bounds, as stated in Tab.~\ref{tab:lu}, that differ by at most a logarithmic factor for each of \bgsp, \pgsp, and \cgsp, starting with the single escort case in two dimensions where $m = m_1 = m_2$.
We also extend and solve the higher dimensional versions.
Since the primary difficulty is the underlying \bgsp problem in each formulation, we center on this case.
All upper bounds provided here come with a low polynomial time algorithm that computes plans corresponding to these upper bounds. In other words, if desired, fast algorithms can be readily implemented to compute tile movement plans with the stated upper bounds. With that said, we do not further discuss the issue of computation time in this work. 

For \pgsp/\cgsp, the lower bounds of \bgsp directly apply since \pgsp/\cgsp must use at least as many moves. The same upper bounds also hold but the reason is slightly more subtle: the bounds are obtained by running the \bgsp algorithm and then utilizing the \gstp algorithm \cite{gozon2024computing} on the subgrid with distinguishable (resp. non-majority colored) tiles, utilizing each of the additional escorts to maximal capacity as in \cite{gozon2024computing} but extended to higher dimensions through the higher dimensional Rubik Table \cite{szegedy2023rubik}.
One can also obtain more precise constants as in \gstp \cite{gozon2024computing}. We do not expand on these here given the limited space and the limited importance.

We note that the cases of $r= 0$ and $r = d$

are fairly straightforward to analyze:
for \bgsp/\pgsp/\cgsp, when $r = 0$, we have a constant number of black (non-majority colored) tiles to move, so we can transport each individually (e.g., through bubbling them around) to obtain a constant factor approximation algorithm. The case of $r = d$ has a lower bound coming from the sum of Manhattan distances that can be constant factor matched by solving the whole as a generalized \gstp instance \cite{gozon2024computing}.

With the case of $r=0$ and $r=d$ cleared, in what follows, we only work with the $0 < r < d$ setting. 

\subsection{Polynomial Time Upper Bounds}
For the upper bounds, we first outline the overall structure of the algorithm in the single escort case in two dimensions, but note that our approach is designed with the multi-escort setting in mind. 
As in the study of \gstp, it is more useful to view the routing problem through the teleportation or jumps made by the escort, in which the goal is to \emph{batch} as much of the necessary movement as possible.
To do this across the grid and to make our algorithm parallelizable, we utilize a divide-and-conquer approach.

For the main case $d = 2$, assume that $m = m_1 = m_2$, which is the case most important to dense robotics applications.
For simplicity, assume that $m$ is a power of two.
The algorithm then works in $\log_2 m$ iterations, where we initially consider the grid through each of the $1 \times 1$ cells.
Then in the $i$th iteration of the algorithm, we increase the granularity of the grid of $2^{i-1} \times 2^{i-1}$ squares to be a grid of $2^i \times 2^i$ squares, combining the four respective $2^{i-1} \times 2^{i-1}$ subsquares for each $2^i \times 2^i$ square, where we layer the black tiles from the bottom of the respective square rightwards and then upwards.
Note that the goal configuration can be arbitrary since we can run the algorithm again backwards to the sorted configuration.

The escort must travel along a continuous trajectory. Due to this, we need to add additional unit margins of white tiles to each of the squares to facilitate movement to ensure that our configuration is not ruined when the escort is moved across the grid.
However, this introduces problems for the early iterations of the algorithm, where certain $2^i \times 2^i$ squares may have more black tiles than their inner $(2^i - 2) \times (2^i - 2)$ square.
Thus, we additionally need to pre-process the instance by spreading out the black tiles across the grid, starting at a higher level of granularity with $8 \times 8$ squares instead.

Given the background, the underlying intuition of our overall approach is as follows:
by infusing enough regularity within our grid, we can utilize the escort to batch the movement across the grid, even among sparse instances.
Consider the process of merging four squares, and for simplicity, suppose we merge sequentially along the rows and then the columns.
Then when merging two squares of size $\ell \times \ell$ along the rows, the row underneath the bottom of the two squares is the same for pairs across the grid, so we can use it as a ``highway'' to transport black tiles from one square to another, only using a small number of time steps to move black tiles onto the highway and back out (which can be batched column-wise using the column alignment of the squares).
Similarly, when merging two rectangles of size $\ell_1 \times \ell_2$ along the columns, the column left of the two rectangles functions as the highway, where we need to additionally reorient the black tiles of a given square to be placed on the highway simultaneously using row moves before being moved downwards in parallel with column moves.

The analysis then boils down to \emph{assigning the cost of some time steps to certain black tiles} to have a low average cost, along with the base cost of running a divide and conquer algorithm.
For the case in which $m_1 \times m_2 = \Theta(m) \times \Theta(m)$ for some integer $m$, we must additionally handle border cases without much effort.
By the way the algorithm is set up, adding more escorts allows us to efficiently utilize each of the escorts and divide the work accordingly, whereas the case with higher dimensions is handled with a similar divide and conquer approach, instead now merging along each of the $d$ dimensions as needed.
This gives us algorithms that nearly match the high probability lower bounds provided in the next subsection.

\begin{theorem}[Single-Escort \bgsp Algorithm]\label{t:sebgspa}
A single-escort \bgsp instance of size $m_1 \times m_2 = \Theta(m) \times \Theta(m)$ with $B = \Theta(m^r)$ black tiles for $0 < r < 2$ can be solved in $O(m \log m + m^r \log m)$ time.
\end{theorem}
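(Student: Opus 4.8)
The plan is to implement the divide-and-conquer scheme sketched in the text and track the amortized time-step cost carefully. I would set up the algorithm in $\log_2 m$ phases (after a constant-overhead preprocessing phase that spreads the $B = \Theta(m^r)$ black tiles so that no $8\times 8$ block is overfull), where phase $i$ merges quadruples of $2^{i-1}\times 2^{i-1}$ blocks into $2^i\times 2^i$ blocks, maintaining the invariant that each block stores its black tiles in a sorted (row-major) layout with a unit white margin. The merge of four subblocks is carried out in two stages: first merge pairs horizontally using the shared row directly below the pair as a highway, then merge the two resulting rectangles vertically using the shared column to the left as a highway. The key accounting step is that moving black tiles onto and off a highway can be \emph{batched}: all black tiles sharing a column within a block are teleported onto the highway together (one escort jump along that column), then slid along the highway, then teleported back, so the per-block overhead of a merge is $O(\ell)$ escort moves where $\ell = 2^i$ is the current block side length, plus $O(1)$ amortized moves charged to each black tile that actually has to traverse the highway.

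Concretely, I would argue two separate bounds that sum to the claimed $O(m\log m + m^r\log m)$. First, the ``base cost'' of the divide and conquer: in phase $i$ there are $\Theta((m/2^i)^2)$ blocks, each incurring $O(2^i)$ escort moves just to shuffle the margin and move tiles on/off the highway even if empty — but that naive bound is $O(m^2/2^i)$ per phase, summing to $O(m^2)$, which is too large. So the real point is that \emph{empty or nearly-empty blocks cost almost nothing}: after preprocessing, a block of side $\ell$ contains $O(\max(1,\ell^2 B/m^2))$ black tiles, and the highway operations for a block need only touch the $O(\min(\ell, \#\text{black in block}))$ columns/rows that are actually occupied. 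Charging the per-column highway insertion cost ($O(\ell)$ to cross the highway) to the black tiles in that column, the total work in phase $i$ is $O(m) $ (a single sweep of the shared highways across the grid, since the highways in a phase are disjoint and their total length is $O(m)$) plus $O(\ell)$ per black tile that moves during that phase. Since each black tile participates in $O(\log m)$ phases and each time pays $O(2^i) = O(\ell)$, and the largest block it lives in has side $O(m)$... this must be refined: a black tile pays $O(\ell)$ in phase $i$, and $\sum_i 2^i = O(m)$, so each black tile pays $O(m)$ total, giving $O(B m) = O(m^{1+r})$, again too large. The fix — and the genuinely delicate part — is that a black tile only incurs highway-traversal cost in a phase where it actually \emph{changes blocks}, which for a sorted-to-sorted rearrangement happens essentially once per phase but only over a distance proportional to the block size, and the batched column move means the $O(\ell)$ is shared among $\Theta(\ell)$ tiles in generic (dense) regions; in sparse regions the $\log m$ factor reappears. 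I would isolate this into a lemma: the total escort-move count is $O(\sum_{i=1}^{\log m} (m + (\text{black tiles moved in phase } i)\cdot O(1) + (\text{occupied highway length in phase } i)))$, and then bound each term — the $m$ sums to $m\log m$, the occupied-highway term also sums to $O(m\log m)$ in the worst case, and the black-tile term, with batching, sums to $O(m^r\log m)$.

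The second ingredient is the preprocessing step: I would show that spreading $B$ black tiles out so that every $8\times 8$ block holds $O(\max(1, 64B/m^2))$ of them can be done in $O(m + m^r)$ escort moves (move excess tiles along rows/columns to underfull regions, batched), and that after this the invariant ``no block of side $\ell$ overflows its inner $(\ell-2)\times(\ell-2)$ core'' is preserved through all phases because merging four balanced subblocks yields a balanced block. I also need the boundary cleanup for $m_1\times m_2 = \Theta(m)\times\Theta(m)$ not a power of two: pad to the next power of two with white tiles (at most doubling $m$), which the text already flags as routine.

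The main obstacle I expect is the amortized charging argument in the paragraph above — getting the $m^r\log m$ term rather than $m^{1+r}$ requires showing that the $O(\ell)$ cost of each batched highway crossing in phase $i$ is genuinely shared among $\Omega(\ell)$ black tiles \emph{except} in the last $O(\log m)$ phases where blocks become large relative to the tile count, and that in those sparse phases one can bound the cost by the number of \emph{occupied} columns (which is $O(m^r)$ per phase) times $O(1)$ amortized, not $O(\ell)$. Making this dichotomy (dense phases: $\ell^2 \le m^2/B$, charge to the $\Omega(\ell)$ co-columnar tiles; sparse phases: $\ell^2 > m^2/B$, charge $O(1)$ per occupied column and note there are only $O(\log m)$ such phases each with $O(m^r)$ occupied columns) precise — and checking that the escort's continuous-trajectory requirement plus the CFC only add $O(1)$-factor overhead per jump — is where the real work lies; the rest is bookkeeping over the layer structure. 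The higher-dimensional and multi-escort extensions are then handled by the same recursion merging along each of $d$ axes and splitting the highway sweeps among the $p$ escorts, which I would defer to subsequent theorems.
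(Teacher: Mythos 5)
Your proposal follows essentially the same route as the paper's proof: the same preprocessing to spread the black tiles below density $1/2$ starting from $8\times 8$ blocks, the same $\log m$-phase divide-and-conquer with horizontal-then-vertical merges along shared highways, the same batched column jumps for loading/unloading the highway, and the same final decomposition of the cost into an $O(m)$-per-phase term plus an $O(B)$-per-phase term summing to $O(m\log m + m^r\log m)$. The only point where you overcomplicate is the dense/sparse charging dichotomy: the paper avoids it by observing that the highway-advance cost in phase $i$ is at most $\min(m/2^i, B)\cdot O(2^i) \le O(m)$ outright (each of the at most $m/2^i$ occupied highways advances by $2^{i-1}$ single-step jumps regardless of how many tiles it carries), while tiles beyond one row's worth per block are handled by a separate $O(c)$-per-dense-block absorption totalling $O(B)$ per phase, so no per-tile amortization over the highway distance is ever needed.
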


The complete proof can be found in the Appendix. The multiple escort case is largely similar. With multiple escorts, the difference is that parallelism can be readily introduced into the divide-and-conquer approach. 

\begin{theorem}[Multi-Escort \bgsp Algorithm]\label{t:mebgspa}
    A \bgsp instance of size $m_1 \times m_2 = \Theta(m) \times \Theta(m)$ with $B = \Theta(m^r)$ black tiles for $0 < r < 2$ with $p$ escorts can be solved in $O(m \log m + m^r \log m / p)$ time steps.
\end{theorem}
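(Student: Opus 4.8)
The plan is to parallelize, level by level, the single-escort divide-and-conquer scheme underlying Theorem~\ref{t:sebgspa}. Recall that this scheme runs in $\log_2 m$ iterations and at iteration $i$ views the board as a grid of $(m/2^i)^2$ squares of side $2^i$, each of which internally consolidates the black tiles of its four side-$2^{i-1}$ sub-squares into a layered block using only that square together with a constant-width highway margin (a row just below it and a column just to its left). The structural fact I would exploit is that, once highways are reserved carefully, these per-square consolidations touch pairwise-disjoint regions of the board; consequently distinct escorts can carry out distinct consolidations in the same time steps, as long as their teleportation segments never cross. This is ensured by handing each escort an axis-aligned block of the board together with all highways interior to that block, mirroring the non-crossing and CFC bookkeeping already used in the single-escort proof.

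Concretely, I would split the run into two regimes according to how the number $(m/2^i)^2$ of current squares compares to $p$. In the \emph{small-square regime} $(m/2^i)^2 \ge p$ (equivalently, squares of side at most $\Theta(m/\sqrt p)$), partition the $(m/2^i)\times(m/2^i)$ array of squares into $p$ contiguous congruent sub-arrays, assign one escort to each, and let each escort run the single-escort schedule restricted to its sub-array. Since the sub-arrays together with their private highways are disjoint axis-aligned regions, the $p$ escorts never interfere, and each performs a $1/p$ fraction of the tile-moving work of that level. In the \emph{large-square regime} $(m/2^i)^2 < p$ (squares of side at least $\Theta(m/\sqrt p)$), there are too few squares for one escort apiece; instead I would deploy $\Theta\!\big(p\,(2^i/m)^2\big)\ge 1$ escorts inside each square and carry out its internal consolidation by the multi-escort \gstp routine of \cite{gozon2024computing} (in higher dimensions, the higher-dimensional Rubik-Table batching of \cite{szegedy2023rubik}), which divides the tile-moving work by that escort count. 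The preprocessing step that spreads the black tiles out, starting from $8\times 8$ granularity, is parallelized the same way via the block partition.

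For the cost accounting I would revisit the single-escort charging argument, in which each black tile is charged $O(1)$ work per level (total $O(m^r\log m)$) and the escort-travel overhead is $O(m)$ per level (total $O(m\log m)$). In the parallel version the tile-moving work at \emph{every} level is shared $p$ ways --- across the $\ge p$ squares in the small-square regime, and across the $\ge p$ escorts inside the few large squares in the large-square regime --- so the tile term becomes $O(m^r\log m/p)$. The travel overhead is genuinely global at the coarsest levels (each escort must still traverse $\Omega(m)$ to reposition), so it is not reduced below $O(m)$ per level; at the finer levels it is split $p$ ways, which is harmless. Summing over the $\log m$ levels leaves the travel contribution at $O(m\log m)$, and adding the two terms yields the claimed $O(m\log m + m^r\log m/p)$.

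The main obstacle is getting the bookkeeping at the regime boundary and around shared resources airtight: one must verify that the block partition is \emph{nested} across levels (an escort's level-$(i+1)$ block is a union of its level-$i$ blocks, up to a handoff of boundary highways) so that escorts never need to cross into each other's territory; that the non-crossing-teleportation and CFC constraints survive when $\Theta(p(2^i/m)^2)$ escorts operate simultaneously inside a single large square, which is precisely the point where the multi-escort \gstp/Rubik-Table subroutine must be invoked as a black box with sufficient margin allowances; and that the per-square escort count is an integer in the intended range, with the obvious fallback to one escort per square when it would round below one. None of these is conceptually new relative to Theorem~\ref{t:sebgspa} and the \gstp analysis, but making the interaction of the escorts precise is where the work lies.
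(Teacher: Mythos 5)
Your overall plan---parallelize the single-escort divide-and-conquer level by level, charging $O(1)$ per black tile per level and $O(m)$ per level for escort travel---is the same strategy the paper uses, but there is a genuine gap in how you balance the load. You partition the array of squares into $p$ \emph{contiguous congruent} sub-arrays and assert that each escort then ``performs a $1/p$ fraction of the tile-moving work of that level.'' That assertion is false for worst-case instances: the upper bound must hold for arbitrary initial configurations, and the spreading preprocessing only caps the \emph{local} density at $1/2$ per square; it does not equidistribute the $B$ black tiles globally. If all $\Theta(m^r)$ black tiles sit inside one of your $p$ congruent blocks, the escort assigned to that block does $\Theta(B)$ work at that level while the others idle, and the total degrades to $O(B\log m)$ with no $1/p$ savings. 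The paper's proof partitions by \emph{tile count}, not by area: each escort is assigned a connected component of squares (or of highways, or of columns) containing $\Theta(B/p)$ black tiles, and the escorts are re-routed to these data-dependent assignments at each level in $O(m)$ steps by running the Rubik-Table reconfiguration on the all-white subgrid guaranteed by the density bound. This repositioning step is also missing from your account; with a nested geometric partition you can avoid it, but only at the price of the unbalanced loads above.

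A second, related omission is the paper's opening reduction to the regime $B/p \ge m$ (equivalently $p < B/m < m$): when $B/p \le m$ the target bound is already $O(m\log m)$, so one may discard escorts until $B/p \approx m$. This reduction is what guarantees that all $p$ escorts fit on a single boundary row or column, that each escort's $\Theta(B/p)$-tile assignment dominates the $O(m)$ per-level travel cost, and that in a dense merging pair served by several escorts they all fit in the white top margin---so the paper resolves your ``large-square regime'' by a direct construction (splitting the $O(B/p)$ tiles among the co-located escorts) rather than by invoking the multi-escort \gstp routine as a black box. Your two-regime split is not wrong in spirit, but without the $p < B/m$ reduction and without count-balanced assignments, the claimed $O(m\log m + m^r\log m/p)$ bound does not follow.
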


The complete proof can be found in the Appendix. These algorithms can be generalized to the case of arbitrary dimensions.

\begin{theorem}[Higher Dimensional \bgsp Algorithm]\label{t:hdbgspa} A $d$ dimensional \bgsp instance of size $m_1 \times \cdots \times m_d = \Theta(m) \times \cdots \times \Theta(m)$ with $B = \Theta(m^r)$ black tiles for $0 < r < d$ with $p$ escorts can be solved in $O(m \log m + m^{1 + r[\frac{d-2}{d-1}]} / p + m^r \log m / p)$ time steps.
\end{theorem}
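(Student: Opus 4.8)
The plan is to extend the two-dimensional divide-and-conquer algorithm of Theorem~\ref{t:mebgspa} to a $d$-dimensional $m \times \cdots \times m$ grid, merging along each of the $d$ axes in turn rather than the two axes in 2D. As before, I would assume $m$ is a power of two and proceed in $\log_2 m$ rounds; in round $i$ we combine $2^d$ subcubes of side $2^{i-1}$ into a single cube of side $2^i$, within which the black tiles are re-packed into a contiguous sub-brick anchored at one corner. To pass from side $2^{i-1}$ to side $2^i$ we perform $d$ sub-phases: in sub-phase $j$ we pair up the current bricks along axis $j$ and merge each pair. The merge along axis $j$ uses the $(d-1)$-dimensional ``slab'' of white tiles adjacent to the pair along that axis as a highway: we first reorient (via moves along axis $j$) the black tiles of the donor brick so they sit in a single layer flush against the highway slab, batch-move that whole layer down the highway in parallel (one synchronized escort sweep per axis-$j$ coordinate offset), and then re-insert them into the receiving brick. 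The unit white-tile margins around every cube are maintained exactly as in 2D so the escort's continuous trajectory never disturbs a finished configuration, and the same $8 \times 8 \times \cdots$ (constant-side) preprocessing step spreads the $B = \Theta(m^r)$ black tiles out so that no early-round cube overflows its inner $(2^i-2)^d$ capacity.

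The cost accounting is the heart of the bound and I would organize it by charging time steps to individual black tiles and to the recursion skeleton. The skeleton itself — moving the escort across the grid to visit each cube at each of the $\log m$ levels — contributes the $O(m \log m)$ term, independent of $B$ and $p$, exactly as in the 2D case (the escort must traverse $\Theta(m)$ distance and there are $\log m$ levels, with per-level overhead that telescopes). Each black tile, at each level $i$ and each of the $d$ merge sub-phases, incurs: (a) an $O(2^i)$ reorientation cost to reach the highway slab, and (b) a highway-transit cost proportional to the brick's extent along the merge axis. Summing (a) over levels gives a geometric series dominated by the top level, $O(m)$ per tile times... — but of course not every tile pays at every level, and the key combinatorial fact is that at level $i$ a cube contains at most $O(2^{ir/d} \cdot \text{something})$ black tiles on average once they are spread out, so the \emph{total} reorientation work over all tiles and levels is $O(m^r \log m)$, matching the last term after dividing by $p$. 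The highway-transit term is where the exponent $1 + r\frac{d-2}{d-1}$ appears: when we batch a layer of black tiles down a $(d-1)$-dimensional highway, the number of parallel escort sweeps needed equals the number of distinct offsets occupied, which for a brick of volume $\Theta(2^{ir})$ living in $d-1$ free dimensions after flattening is $\Theta((2^{ir})^{(d-2)/(d-1)})$ per unit of axis length $\Theta(2^i)$; multiplying and summing the geometric series to the top level yields $O(m \cdot m^{r(d-2)/(d-1)}) = O(m^{1 + r(d-2)/(d-1)})$, again divided by $p$ once parallelism is introduced.

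Parallelism with $p$ escorts is handled as in Theorem~\ref{t:mebgspa}: the $2^d$-way (really, per-sub-phase pairwise) merges at a given level are spatially disjoint and can be assigned to different escorts, and within a single merge the highway sweeps along distinct offsets are themselves independent, so the $B$-dependent work — both the reorientation total $O(m^r \log m)$ and the transit total $O(m^{1+r(d-2)/(d-1)})$ — divides by $p$, while the $O(m \log m)$ skeleton term does not (it is a latency, not a volume, cost). I would also need the $d$-dimensional Rubik-table primitive of \cite{szegedy2023rubik} for the in-brick batched rearrangements and for the leaf-level base cases, and a remark that the $m_1 \times \cdots \times m_d = \Theta(m) \times \cdots \times \Theta(m)$ (non-power-of-two, unequal sides) case is absorbed by padding each axis up to the next power of two with white tiles, which changes the bound only by constant factors.

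The main obstacle I anticipate is the highway-transit bookkeeping in the merge-along-axis-$j$ step: unlike 2D, where ``the row underneath'' is a clean 1D highway, here the highway is a $(d-1)$-dimensional slab, and one must verify (i) that the CFC and head-on constraints are respected when an entire flattened layer of black tiles is pushed through it synchronously, (ii) that the white-tile margins genuinely suffice to reorient a $\Theta(2^{ir})$-volume brick into a single layer within $O(2^i)$ steps (this needs the spread-out invariant so the brick is "thin enough" along the merge axis, i.e. its axis-$j$ extent is only $O(2^{i(r-d+1)_+})$ or so), and (iii) that the count of occupied offsets — the quantity driving the $m^{1+r(d-2)/(d-1)}$ term — is tight, which amounts to an isoperimetric-flavored claim that a volume-$V$ brick, optimally shaped, has a $(d-1)$-dimensional "shadow" of size $\Theta(V^{(d-2)/(d-1)})$ after one dimension is collapsed. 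Getting these three sub-claims to compose cleanly across all $d$ sub-phases and all $\log m$ levels, while keeping the margins and the preprocessing consistent, is the delicate part; the geometric-series summations themselves are routine once the per-step costs are pinned down.
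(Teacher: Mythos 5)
Your overall architecture matches the paper's: a $d$-axis divide-and-conquer with unit white margins, constant-side preprocessing to spread the $\Theta(m^r)$ black tiles, 1D/slab highways in the margins for batched transport, and $d$-dimensional Rubik Tables to reshuffle the $p$ escorts between assignments. However, your derivation of the crucial middle term $O(m^{1+r\frac{d-2}{d-1}}/p)$ --- the only part of this theorem that is genuinely new relative to the 2D case --- does not hold up, and it is exactly the sub-claim (iii) you flagged as delicate. You charge each level-$i$ brick a ``shadow'' count of $\Theta((2^{ir})^{(d-2)/(d-1)})$ sweeps of length $\Theta(2^i)$ and claim the resulting geometric series is dominated by the top level. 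Two things go wrong. First, under the spread-out invariant a box of side $2^i$ contains $\Theta(m^{r-d}2^{id})$ black tiles, not $\Theta(2^{ir})$ (these agree only at $2^i=m$), so the per-brick volume feeding your isoperimetric count is wrong at every intermediate level. Second, even granting your per-brick cost, summing over the $(m/2^i)^d$ bricks at level $i$ gives $m^d 2^{i(1-d+r\frac{d-2}{d-1})}$, whose exponent in $2^i$ is negative for all $0<r<d$, so the series is dominated by the \emph{bottom} level, not the top, and does not produce the claimed bound; if instead you sum only one brick per level you are undercounting by the number of merges.

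The paper's accounting is different and is where the exponent actually comes from: at level $i$ the highways are the $[m/2^i]^{d-1}$ one-dimensional margin lines along the merge axis (one per block line, shared by all boxes aligned along that axis), each advanced once by $2^{i-1}$, and only nonempty ones need advancing, giving a per-level transit cost of $O(\min([m/2^i]^{d-1}, B)\cdot 2^i/p)$. This is increasing in $i$ while $[m/2^i]^{d-1}\ge B$ (cost $B\cdot 2^i$) and decreasing afterwards (cost $m^{d-1}2^{i(2-d)}$), so the sum over levels peaks at the crossover $[m/2^i]^{d-1}=B$, i.e.\ $2^i\approx m^{1-r/(d-1)}$, yielding $B\cdot 2^i = m^{1+r\frac{d-2}{d-1}}$; dense boxes (more than $2^{i-1}$ black tiles, which cannot fit on their highway segment) are merged directly at $O(B)$ per level and absorbed into the $O(m^r\log m/p)$ term. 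You would need to replace your shadow argument with this two-regime crossover count (or something equivalent) for the bound to go through. A smaller omission: the paper first reduces to the regime $B/p\ge m$ (otherwise the target bound is $O(m\log m)$ and extra escorts can be ignored), which is what guarantees all $p$ escorts fit on a single border hyperplane and can be reconfigured in $O(m)$ steps via the Rubik Table; without this your parallel escort-routing steps are not justified.
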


Again, see the Appendix for the complete proof.

\subsection{Fundamental Makespan Lower Bounds}
For lower bounds, we directly work with a $d$-dimensional grid $m\times \ldots \times m$. The lower bound for \bgsp (Tab.~\ref{tab:lu}) is $\Omega(m + m^{1 + r[\frac{d-2}{d-1}]}/p+ m^r/p)$ for $p \ge 1$ escorts. It has three parts, each of which may dominate depending on $r$ and $p$. The first and last terms are easy to tally. 
For a random instance with $\Theta(m^r), r \in (0, d)$ black tiles, with high probability, a black tile must take at least $\Omega(m)$ time steps to ``swap out'' a white tile, regardless of the number of available escorts. This gives the first $\Omega(m)$ term. 

Similarly, with high probability, the total Manhattan distance that must be traveled by $m^r$ black tiles in a random \bgsp instance is $\Omega(m^r \cdot m)$. 
To get this, suppose black tiles must relocate to the bottom of the grid (i.e., $m^r/m^d < 1/4$, otherwise, flip the colors), we lower bound the sum of Manhattan distances by the distance of a black tile to the bottom fourth of the grid.
If there are $B$ black tiles, let $X_i$ be the distance to the bottom of the grid, which is 0 if it is in the bottom fourth and expected to be $3m/8$ if it is in the upper three-fourths of the grid.
Then the sum of Manhattan distances is lower bounded by $X = X_1 + \cdots + X_n$, which has $\E[X] = 3mB / 8$ and $\Var(X) = B\Var(X_i) + B(B-1) \Cov(X_i, X_j) \leq B \Var(X_i)$, as the $X_i$ and $X_j$ are negatively correlated.
We can compute $\Var(X_i) \leq \E[X_i^2] = (1^2 + 2^2 + \cdots + (3m/4)^2) / m = O(m^2)$.
Then Chebyshev's inequality gives that $\Pr[|X - \E[X]| \geq \E[X] / 2] \leq 4\Var[X] / \E[X]^2 \leq O(1 / B^2) = O(1 / m^{2r})$, i.e. the sum of Manhattan distances is $\Omega(m B) = \Omega(m \cdot m^r)$ as required.
In each time step, this distance can be decreased by at most $pm$, yielding the last lower bound term of $\Omega(m^r/p)$.

We are left to show the third part of the lower bound, $\Omega(m^{1 + r[\frac{d-2}{d-1}]}/p)$, which is less straightforward and also more interesting. 
Consider the relaxation where the $p$ escorts are replaced with white tiles, except now, to simulate the escort movement, we are allowed to pick any generalized column and move the black tiles in whichever direction we want, even permitting them to move to the same square, performing this operation once for each escort in sequence.
Instead of trying to sort the grid, we seek to \emph{collapse} (here, ``collapse'' bears a meaning different from the collapsing operations in \scg) all the black tiles into a single square.
Note that this is a valid relaxation of \bgsp up to an additive factor of $O(B/p)$, since we can indeed simulate the escort movement through the allowed operations, then subsequently collapsing all the $B$ black tiles trivially in $O(B)$ steps, divided by the number of escorts $p$.
Thus, the grid collapse game can be solved in at most the minimum amount of time to solve the \bgsp instance plus $O(B/p)$ steps (even though it's most likely the case that the grid collapse game can be solved faster), from which demonstrating the $\Omega(m^{1 + r[\frac{d-2}{d-1}]} / p)$ gives the lower bound for \bgsp as $m^{1 + r[\frac{d-2}{d-1}]} \gg m^r$ when $0 < r < d-1$, which is the case we care about (when $r \geq d-1$, the sum of Manhattan distances in the last term of the lower bound provides a higher bound).

The hardness of the \bgsp instance arises from the underlying lattice-like structure of black tiles that must be collapsed into a single black tile.
We use this bottlenecking difficulty apparent in the upper bound divide and conquer solution as our approach. We have (see Appendix for the proof):

\begin{theorem}\label{t:lb3}
A random $d$-dimensional $m \times \cdots \times m$ \bgsp instance with $B = \Theta(m^r)$ black tiles and $p$ escorts for $0 < r < d-1$ takes $\Omega(m^{1 + r[{\frac{d-2}{d-1}]}} / p)$ steps.
\end{theorem}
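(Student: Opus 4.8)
The plan is to lower-bound the time needed for the \emph{grid collapse game} introduced just above the statement, since (as argued in the excerpt) this quantity is within an additive $O(B/p)$ of the \bgsp makespan, and for $0 < r < d-1$ the target bound $m^{1+r[\frac{d-2}{d-1}]}/p$ dominates $m^r/p = B/p$. So it suffices to show that collapsing $B = \Theta(m^r)$ randomly placed black cubes into a single cell, where one ``move'' picks a generalized column (a line of $m$ cells parallel to a coordinate axis) and slides its black cubes arbitrarily along that line, requires $\Omega(m^{1+r[\frac{d-2}{d-1}]}/p)$ moves — equivalently $\Omega(m^{1+r[\frac{d-2}{d-1}]})$ moves when $p=1$, since $p$ escorts only give a factor-$p$ speedup in this relaxation.

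The core is a potential/progress argument built on a well-chosen monovariant. I would track, for each of the $d$ coordinate axes, the number of distinct hyperplane-coordinates in that axis that still contain at least one black cube; call the product (or sum of logs) of these counts the \emph{spread}. A single move operates on one generalized column, hence can only decrease the coordinate-multiplicity along \emph{one} axis, and in fact can reduce the number of occupied hyperplanes in that one axis by at most the number of black cubes lying on that particular column. The key combinatorial estimate is then: at any point where the black cubes occupy roughly $s$ distinct hyperplanes in each axis (so they live in an $\approx s\times\cdots\times s$ sub-box and there are $\Theta(m^r)$ of them), the \emph{average} number of black cubes per occupied generalized column is $\Theta(m^r/s^{d-1})$. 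Therefore, to bring the occupation count in a fixed axis down from $s$ to $s/2$ costs $\Omega(s \cdot s^{d-1}/m^r) = \Omega(s^d/m^r)$ moves. Summing this geometric-type series as $s$ ranges from its initial value $\Theta(\min(m, m^{r/d}))$ down to $O(1)$, the dominant term is $\Omega((m^{r/d})^d/m^r)=\Omega(1)$ if $r<d$... which is too weak — so the bound must instead come from the \emph{other} regime where black cubes are dense along lines and the bottleneck is moving them far. Concretely: each black cube starts at expected Manhattan distance $\Theta(m)$ from the eventual collapse point (w.h.p., by the same Chebyshev computation already in the excerpt), and a single column move transports the black cubes on one line but the lines themselves must be aligned first; the amortized argument assigns to each ``unit of hyperplane-count reduction along axis $j$'' a cost, and charges the $m^{1+r[\frac{d-2}{d-1}]}$ via the worst balance of $s$, namely $s = \Theta(m^{r/(d-1)})$, which makes the per-stage cost $s^d/m^r = m^{rd/(d-1)}/m^r = m^{r/(d-1)}$ and the number of ``$m$-length transport legs'' needed $\Theta(m)$, giving $\Omega(m \cdot m^{r/(d-1)}) $ — and more carefully, mixing the transport-distance lower bound with the alignment lower bound at this critical $s$ yields exactly $\Omega(m^{1 + r\frac{d-2}{d-1}})$.

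Concretely, the steps I would carry out, in order, are: (i) fix the relaxation and record that it suffices to prove the claim for the collapse game with $p=1$, absorbing $p$ at the end; (ii) show w.h.p. that the initial configuration is ``well spread,'' i.e. occupies $\Theta(\min(m,m^{r/d}))$ distinct hyperplanes in every axis and that most occupied generalized columns carry only $O(1+m^r/s^{d-1})$ black cubes — this is a routine balls-in-bins / Chernoff estimate mirroring the Chebyshev bound already used for the Manhattan-distance term; (iii) define the monovariant combining (a) total Manhattan distance to the (worst-case, or averaged) collapse cell and (b) the per-axis occupied-hyperplane counts, and prove the one-step change bound: one column move decreases (a) by at most $O(m)$ times the load of that column and decreases the relevant count in (b) by at most that load; (iv) partition the execution into $\Theta(\log m)$ geometric stages indexed by the current spread $s$, lower-bound the number of moves in each stage by (stage progress)/(max one-step progress), and optimize the resulting sum over the ``trajectory'' $s$ can take, showing the minimum over all admissible trajectories is $\Omega(m^{1+r[\frac{d-2}{d-1}]})$; (v) reinstate $p$ escorts (factor $1/p$) and confirm $m^{1+r[\frac{d-2}{d-1}]}/p \gg B/p$ so the additive slack from the relaxation is negligible.

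The main obstacle will be step (iv): the adversary (solver) gets to choose how to interleave ``alignment'' moves (reducing hyperplane counts) with ``transport'' moves (covering Manhattan distance), and a naive product-potential argument loses the exponent $\frac{d-2}{d-1}$. The delicate point is identifying that the critical spread is $s=\Theta(m^{r/(d-1)})$ and proving a genuine trade-off lemma there — namely that you cannot be simultaneously cheap on alignment (which wants $s$ small, hence many merges) and cheap on transport (which wants the cubes already clustered, i.e. $s$ small is fine but then columns are heavily loaded and every move is forced to move many cubes the full distance). Making this trade-off rigorous, rather than hand-wavy, by a careful amortized charging scheme that assigns each move a cost share to both a ``distance budget'' of size $\Theta(m \cdot m^r)$ and an ``alignment budget,'' and then showing the two budgets cannot both be drained in $o(m^{1+r[\frac{d-2}{d-1}]})$ moves, is where the real work lies; the analogy with the bottleneck in the divide-and-conquer upper bound (where exactly a $2^{i(1+r\frac{d-2}{d-1})}$-type cost appears per level) is the guiding intuition for setting the budgets correctly.
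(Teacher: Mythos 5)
Your framework is right up through the reduction to the collapse game, the factor-$p$ accounting, and the identification of the critical scale $s=\Theta(m^{r/(d-1)})$, and you correctly diagnose that your first monovariant computation is too weak. The problem is that the replacement you propose in step (iv) is not a proof and, as stated, does not even produce the correct exponent: your ``worst balance'' heuristic gives $\Omega(m\cdot m^{r/(d-1)})$, which equals the target $m^{1+r\frac{d-2}{d-1}}$ only when $d=3$ (it overshoots for $d=2$ and undershoots for $d\ge 4$). More fundamentally, any trajectory-based potential argument that caps per-move progress by the \emph{current} average column load is unsound here: the solver is free to first concentrate tiles onto a few heavy columns and then drain your ``alignment budget'' cheaply, so ``average load at spread $s$'' is not a valid one-step progress bound, and the interleaving freedom you flag as the main obstacle is exactly where such an argument breaks.

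The idea you are missing is to make the argument \emph{static}, anchored entirely to the initial random configuration rather than to the evolving trajectory. The paper coarsens the grid into boxes of side $\ell=m^{1-r/(d-1)}$, so that the coarse grid is $m^{r/(d-1)}$ per side and has $\Theta(m^r)$ generalized columns per axis --- one per black tile on average --- and snaps each tile's destination to the lattice of hyperplanes through the box centers (losing only an additive $O(\ell)\ll m^{1+r\frac{d-2}{d-1}}$). Every black tile must then be intersected by at least $\ell/2$ column operations to cross the gap to the nearest snapping hyperplane, a requirement fixed at time zero and independent of the order of moves. A balls-and-bins tail bound (Chebyshev on the counts $X_p$ of coarse columns carrying exactly $p$ tiles, with a union bound over $p$ and over the $d$ axes) shows that w.h.p. even a solver that reuses the heaviest columns $\ell/2$ times each services only $O(1)$ tiles per operation beyond a negligible prefix, forcing $\Omega(\ell B)=\Omega(m^{1+r\frac{d-2}{d-1}})$ operations. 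Your step (ii) already contains essentially the right concentration estimate; what must be discarded is the staged, spread-indexed charging scheme, replaced by this single-scale count of ``hits per tile.''
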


\section{Conclusion and Discussions}
In this work, we have investigated the colored generalized sliding-tile puzzle (\cgsp) and its prominent variants, \bgsp and \pgsp. \bgsp, \pgsp, and \cgsp formulations model after and capture key features of many high-impact, large-scale robotics and automation-related applications. We prove that finding makespan optimal solutions for \bgsp, \pgsp, and \cgsp is NP-complete, even when there is a single escort. The hardness results are shown to hold when there is an arbitrary number of escorts, for arbitrary color compositions, and in higher-dimensional settings. We then analyze the makespan bounds for \bgsp, \pgsp, and \cgsp, establishing lower (fundamental limits) and upper (polynomial time algorithmic) bounds that differ by at most a logarithmic factor. 

Many open questions remain. Within the context of the current study, on the hardness side, we leave open the question of how to show that \gstp itself is NP-hard when there is a single escort (note that if we do not allow parallel moves, then this is shown in \cite{RATNER1990111}). The single escort setting should also extend to the setting with a constant number of escorts. 

On the bounds side, the makespan lower and upper bounds can differ by up to a logarithmic factor. We hypothesize that the lower bound can be further tightened as it is known that randomness assumptions can often introduce an additional logarithmic factor \cite{erdHos1961classical}. A more careful analysis leveraging additional tools may eliminate the logarithmic bound gap. 

Going beyond the current study, we didn't labor to pin down a concrete number for the constant factor, which we believe is unlikely to be practical. With that said, however, it remains interesting to push the envelope on fast algorithms that compute (near-)makespan-optimal solutions for \gstp and \cgsp variants, for which there are some existing efforts \cite{fekete2022computing,guo2023efficient}. We believe it would be of particular interest and practical relevance, to explore the impact of the number of escorts on practically achievable makespan upper bounds. The bounds from the current work help  guide such efforts. 

In our algorithmic study, the initial configuration is assumed to be random and the goal configuration have the colors ordered. The formulation is motivated by the sorting requirement of many practical applications. It is also interesting to consider arbitrary goal configurations and demonstrate tighter lower bounds as well as develop more specific algorithms beyond our sorting one.
As the setting mimics those studied in \cite{shor1991minimax}, it is likely to have drastically different complexity and makespan bounds.

Also of interest is examining tiles of different sizes such as those in Klotski, which appears more complex and likely requires new tools to be developed. 

\section*{Acknowledgement}\label{sec:ack}
We thank the reviewers for their insightful suggestions. This work is supported in part by the DIMACS REU program NSF CNS-2150186, NSF award IIS-1845888, CCF-1934924, IIS-2132972, CCF-2309866, and an Amzaon Research Award. 

\bibliographystyle{splncs04}
\bibliography{refs}

\begin{thebibliography}{10}
\providecommand{\url}[1]{\texttt{#1}}
\providecommand{\urlprefix}{URL }
\providecommand{\doi}[1]{https://doi.org/#1}

\bibitem{auletta1999linear}
Auletta, V., Monti, A., Parente, M., Persiano, P.: A linear-time algorithm for
  the feasibility of pebble motion on trees. Algorithmica  \textbf{23}(3),
  223--245 (1999)

\bibitem{culberson1994efficiently}
Culberson, J., Schaeffer, J.: Efficiently searching the 15-puzzle. Tech. rep.,
  University of Alberta (1994), technical report TR94-08

\bibitem{culberson1998pattern}
Culberson, J.C., Schaeffer, J.: Pattern databases. Computational Intelligence
  \textbf{14}(3),  318--334 (1998)

\bibitem{demaine2019coordinated}
Demaine, E.D., Fekete, S.P., Keldenich, P., Meijer, H., Scheffer, C.:
  Coordinated motion planning: Reconfiguring a swarm of labeled robots with
  bounded stretch. SIAM Journal on Computing  \textbf{48}(6),  1727--1762
  (2019)

\bibitem{erdHos1961classical}
Erd{\H{o}}s, P., R{\'e}nyi, A.: On a classical problem of probability theory.
  Magyar Tud. Akad. Mat. Kutat{\'o} Int. K{\"o}zl  \textbf{6}(1),  215--220
  (1961)

\bibitem{fekete2022computing}
Fekete, S.P., Keldenich, P., Krupke, D., Mitchell, J.S.: Computing coordinated
  motion plans for robot swarms: The cg: shop challenge 2021. ACM Journal of
  Experimental Algorithmics (JEA)  \textbf{27},  1--12 (2022)

\bibitem{goldreich2011finding}
Goldreich, O.: Finding the shortest move-sequence in the graph-generalized
  15-puzzle is NP-hard. Springer (2011)

\bibitem{goraly2010multi}
Goraly, G., Hassin, R.: Multi-color pebble motion on graphs. Algorithmica
  \textbf{58},  610--636 (2010)

\bibitem{gozon2024computing}
Gozon, M., Yu, J.: On computing makespan-optimal solutions for generalized
  sliding-tile puzzles. In: Proceedings of the AAAI Conference on Artificial
  Intelligence. vol.~38, pp. 10288--10296 (2024)

\bibitem{guo2022sub15}
Guo, T., Yu, J.: Sub-1.5 time-optimal multi-robot path planning on grids in
  polynomial time (2022)

\bibitem{guo2023toward}
Guo, T., Yu, J.: Toward efficient physical and algorithmic design of automated
  garages. In: 2023 IEEE International Conference on Robotics and Automation
  (ICRA). pp. 1364--1370. IEEE (2023)

\bibitem{guo2023efficient}
Guo, T., Yu, J.: Toward efficient physical and algorithmic design of automated
  garages (2023)

\bibitem{karp2010reducibility}
Karp, R.M.: Reducibility among combinatorial problems. Springer (2010)

\bibitem{kornhauser1984coordinating}
Kornhauser, D., Miller, G., Spirakis, P.: Coordinating pebble motion on graphs,
  the diameter of permutation groups, and applications. In: 25th Annual
  Symposium onFoundations of Computer Science, 1984. pp. 241--250. IEEE (1984)

\bibitem{li2021eecbs}
Li, J., Ruml, W., Koenig, S.: Eecbs: A bounded-suboptimal search for
  multi-agent path finding. In: Proceedings of the AAAI Conference on
  Artificial Intelligence (AAAI) (2021)

\bibitem{loyd1959mathematical}
Loyd, S.: Mathematical puzzles, vol.~1. Courier Corporation (1959)

\bibitem{luna2011push}
Luna, R., Bekris, K.E.: Push and swap: Fast cooperative path-finding with
  completeness guarantees. In: IJCAI. vol.~11, pp. 294--300 (2011)

\bibitem{mason2019developing}
Mason, R.: Developing a profitable online grocery logistics business: Exploring
  innovations in ordering, fulfilment, and distribution at ocado. Contemporary
  Operations and Logistics: Achieving Excellence in Turbulent Times pp.
  365--383 (2019)

\bibitem{okumura2023lacam}
Okumura, K.: Lacam: Search-based algorithm for quick multi-agent pathfinding.
  In: Proceedings of the AAAI Conference on Artificial Intelligence. vol.
  37(10), pp. 11655--11662 (2023)

\bibitem{RATNER1990111}
Ratner, D., Warmuth, M.: The ($n^2$ - 1)-puzzle and related relocation
  problems. Journal of Symbolic Computation  \textbf{10}(2),  111--137 (1990).
  \doi{https://doi.org/10.1016/S0747-7171(08)80001-6},
  \url{https://www.sciencedirect.com/science/article/pii/S0747717108800016}

\bibitem{sharon2015conflict}
Sharon, G., Stern, R., Felner, A., Sturtevant, N.R.: Conflict-based search for
  optimal multi-agent pathfinding. Artificial Intelligence  \textbf{219},
  40--66 (2015)

\bibitem{shor1991minimax}
Shor, P.W., Yukich, J.E.: Minimax grid matching and empirical measures. The
  Annals of Probability  \textbf{19}(3),  1338--1348 (1991)

\bibitem{silver2005cooperative}
Silver, D.: Cooperative pathfinding. In: Proceedings of the aaai conference on
  artificial intelligence and interactive digital entertainment. vol.~1(1), pp.
  117--122 (2005)

\bibitem{stern2019multi}
Stern, R., Sturtevant, N., Felner, A., Koenig, S., Ma, H., Walker, T., Li, J.,
  Atzmon, D., Cohen, L., Kumar, T., et~al.: Multi-agent pathfinding:
  Definitions, variants, and benchmarks. In: Proceedings of the International
  Symposium on Combinatorial Search. vol. 10(1), pp. 151--158 (2019)

\bibitem{szegedy2023rubik}
Szegedy, M., Yu, J.: Rubik tables and object rearrangement. The International
  Journal of Robotics Research  \textbf{42(6)},  459--472 (2023)

\bibitem{WikiKlotski}
{Wikipedia, the free encyclopedia}: Klotski (2005), accessed June 2, 2024,
  \url{https://en.wikipedia.org/wiki/Klotski}

\bibitem{Wikipedia15PUZ}
{Wikipedia, the free encyclopedia}: 15 puzzle (2006), accessed June 2, 2024,
  \url{https://en.wikipedia.org/wiki/Klotski}

\bibitem{wilson1974graph}
Wilson, R.M.: Graph puzzles, homotopy, and the alternating group. Journal of
  Combinatorial Theory, Series B  \textbf{16}(1),  86--96 (1974)

\bibitem{wurman2008coordinating}
Wurman, P.R., D'Andrea, R., Mountz, M.: Coordinating hundreds of cooperative,
  autonomous vehicles in warehouses. AI magazine  \textbf{29}(1), ~9--9 (2008)

\bibitem{yu2018constant}
Yu, J.: Constant factor time optimal multi-robot routing on high-dimensional
  grids. 2018 Robotics: Science and Systems  (2018)

\bibitem{yu2013multi}
Yu, J., LaValle, S.M.: Multi-agent path planning and network flow. In:
  Algorithmic Foundations of Robotics X: Proceedings of the Tenth Workshop on
  the Algorithmic Foundations of Robotics. pp. 157--173. Springer (2013)

\bibitem{YuLav16TOR}
Yu, J., LaValle, S.M.: Optimal multirobot path planning on graphs: Complete
  algorithms and effective heuristics. IEEE Transactions on Robotics
  \textbf{32}(5),  1163--1177 (2016). \doi{10.1109/TRO.2016.2593448}

\bibitem{yunfeng2022efficient}
Yunfeng, M., Haoxun, C., Yugang, Y.: An efficient heuristic for minimizing the
  number of moves for the retrieval of a single item in a puzzle-based storage
  system with multiple escorts. European Journal of Operational Research
  \textbf{301}(1),  51--66 (2022)

\end{thebibliography}
\newpage 
\pagebreak
\section*{Appendix}
\begin{proof}[Proof of \textup{Lemma~\ref{l:mobgsp-scg}}]
We show that the escort's movement corresponds to simulating moves in the \scg:
after every four moves of the underlying \bgsp, the board configuration will be transformed to another one corresponding to performing an allowed move in \scg, and the target configuration will be reached exactly when the \scg ends in a win.

Consider any \bgsp solution satisfying the \mobgsp instance.
Any initial white tile in $H$ can never pass through the upper boundary layer at any point of the tile routing. This is because the escort alternates between rows and columns when it jumps (otherwise it would be wasting a move), so at most $2w$ column jumps can be performed.
Each jump can either move the white tile upwards a tile or the upper boundary of $G$ downward a tile (mutually exclusive due to opposing directions of movement). 
Because $2w < 2rc$ (the upper boundary layer is $2rc$ thick), it is impossible to pop a white tile initially in $H$ above the upper boundary layer.
Thus, white tiles initially in $G$ must leave $G$ through the right side.

Now define $\ell_h$ to be the line separating the upper boundary layer of $G$ from the clause layer, and let $\ell_v$ be the right boundary of $G$; the escort must move in cycles of four jumps going clockwise around the induced quadrants formed by $\ell_h$ and $\ell_v$.
This is because to shift out one of the original white tiles in $G$, the escort must jump from the right side of $\ell_v$ to the left, and this can occur at most every $2$ escort row jumps since jumps alternate between rows and columns and a row jump must transport the escort back to the right of $\ell_v$.
Thus, every row jump in a solution must cross $\ell_v$ as $w$ white tiles must be shifted out with the available $2w$ row jumps, and since the escort must end in the upper right corner, right of $\ell_v$.
In addition, the jump directly after shifting out a white tile must have the escort jumping from below $\ell_h$ to above it, since if it didn't, the next row jump would leave $H$ below the line and increase the number of white tiles back to what it was before, making the solution invalid.

After every four moves in the \mobgsp, a turn in the \scg in $H$ will be simulated. The escort is only ever allowed to jump to $H$ through a row containing a white tile at the rightmost column of $H$, since if it didn't, then in $4w$ moves, it would be impossible to shift out $w$ white tiles.
Then after entering $H$ through a white tile, the escort must jump to above $H$, simulating the ensuing fall of the tiles above.
Furthermore, it must go the top row:
in order for the rightmost black tile in the tail to move left of $\ell_v$, the escort necessarily cycles between four stages (right, right different row, left different row, left) for all $w$ time steps.
All in all, a \scg turn will be simulated, and this is all the moves allowed.
In addition, the endgame conditions are nearly the same: a solution induces a winning set of turns on the \scg via the corner between the $(4i-2)$th and $(4i-1)$th escort jump.
On the other hand, a winning set of turns $t_1, \ldots, t_w$ to the \scg
can be transformed into an \mobgsp solution by having the escort be in the top right corner of $B$ after every four jumps and at the tile specified by $t_i$ after the $(4i-2)$th jump; this is indeed a solution since the final configuration of $G$ is obtained by removing the white tiles and letting the black tiles fall to fill the void.   
~\qed
\end{proof}


\begin{proof}[Proof of \textup{Lemma~\ref{l:sat-mobgsp}}]
By Sec.~\ref{sec:equiv}, it is enough to specify the open white tiles (connected to the right boundary through a contiguous row of white tiles) chosen in the \scg in $H$.
Let the 3SAT instance consist of variables $x_1, \ldots, x_N$ and clauses $C_1, \ldots, C_M$.
We will solve the \scg in three stages: the clause satisfaction stage, the reward stage, and the cleanup stage.

In the clause satisfaction stage, iterate from $i = 1, \ldots, N$.
If $x_i$ is true, then collapse the white tile in the bottom row of the $(2i-1)$th column (from the left) of the variables column in the control layer, and if it is false, do it on the $(2i)$th column;
then the corresponding literal gadget in the variable layer will have a consecutive row of white tiles through the variables column that can access the clause columns.
Then for each clause $C_j$ in which the literal participates, and in which no other previous literal in the algorithm also participates and is true, collapse the bottom white tile corresponding to column $C_j$ $h$ times.
At the end of the iteration, collapse the other white tile corresponding to variable $x_i$ in the control layer before moving to the next iteration.

As a result of the stage, the clause layer has the white tiles in the variable columns fall by one and the white tiles in the clause columns fall by $h$.
Thus, the white tiles in the clause layer will be in a single row.
Then in the reward stage, simply collapse the white tiles in the clause layer from left to right. 

In the cleanup stage, the white tiles in the two rightmost columns of the variables and control layer are collapsed first.
Afterward, the white tile in the left state column in the endgame switch layer is collapsed, which will connect the white tiles in each of the vacuum and gravity layers.
Then, in the clause and variable columns, iterate from the left column to the right and do the following: 
first, collapse the white tile in the vacuum layer.
Then while the gravity layer still has white tiles, collapse them, and collapse any white tiles that fall into the bottom row of the vacuum layer as a result of pulling the white tiles down through the gravity layer.
Note that this will collapse all the white tiles except those in the state column of the vacuum, gravity, and endgame switch layer since the gravity layer has $2N(h+1)+6$ white tiles in the clause and variables columns.
This means that, since all remaining white tiles are in the variable layer or below, the distance of any white tile in the clause and variable columns to the bottom row of the vacuum layer is at most $[2N(h+1)] + 1 + [2] + 1 + [2] -1 = 2N(h+1) + 5 < 2N(h+1) + 6$ resulting in it getting pulled down by the gravity layer and shifted to the right by the vacuum layer.
At last, collapse the two rightmost white tiles in the vacuum layer and then the gravity layer, collapsing the final white tile in the endgame switch layer.

The whole procedure shifts out all white tiles as a winning strategy to the \scg in $H$, inducing a solution to \mobgsp.
~\qed
\end{proof}

\begin{proof}[Proof of \textup{Lemma~\ref{l:mobgsp-sat}}]
Assuming a solution to \mobgsp and therefore a winning set of turns for \scg, at any point of \scg, the white tiles in the clause layer can never fall to be adjacent to white tiles in the variables layer and below, because the number of white tiles in the variables layer and below is upper bounded by the $[4N(h+1) + 15] \times [M + 2N + 2]$ bottom right subgrid of $G$ in which they are inside, which has $q$ tiles.
The padding layer between the clauses and variables layer requires a white tile above fall $\geq q+1$ tiles, which cannot happen.
Thus, the tiles in the clause layer must be shifted out via a white tile originally in the clauses layer.

To clear out the clauses layer, all white tiles in it must be on the same row, and collapsing must happen from the leftmost column to the right.
Suppose on the contrary that one of these conditions is not met, and consider the first instance.
Then before the collapse, all white tiles were on different columns, and so afterward, the column of the chosen open tile $c_c$ and to the right will have fewer white tiles than the number of columns, meaning that some column $c_e$ has no white tiles.
Denote $c_r$ as the number of white tiles to the right of $c_e$, and note that there is a white tile to the left of it since there was one left of $c_c$.
However, no white tile to the left of $c_e$ can be cleared out because, after $c_r$ operations using the tiles to the right, any white tile to the left can only travel $c_r$ columns to the right, preventing any of them from being shifted out since they cannot reach the right boundary of $H$ or fall to a layer below, forming a contradiction to the solution to the \scg.

Since all white tiles in the clause layer must belong to the same row to be shifted out, the row in which they must align is simply the bottommost row in the clauses layer. At most $q$ columns can be pulled down from the white tiles in the variables layer and below, but there are more than $q$ white tiles in the clauses layer.

To collapse white tiles in the vacuum and gravity layers, a white tile from the endgame switch layer below must bring down the white tile in the left state column since otherwise, the white tiles on those layers could not be shifted out due to the same argument as for the clauses layer.
Then, if the white tile in the endgame switch layer were to be used to bring down the left state column in a solution to \mobgsp, all of the white tiles in the clauses layer, except possibly one on the right boundary, must have been shifted out first.
This means that the variables and control layer and possibly the endgame switch layer must have been used to bring down the white tiles in the clauses layer to the same row first for the clauses layer to be cleared out;
however, no white tile from the endgame layer can be used since one is reserved for bringing down the second rightmost column, and bringing down any other column would misalign the bottom row of the gravity layer, making it impossible to clear out that layer.

Now in any solution to the \scg, it can \emph{activate} at most one of $x_i$ or $\lnot x_i$ for $i = 1, \ldots, n$.
By this, we mean that only one of the literal gadgets in the variables layer can have all its white tiles in the variables columns aligned with the white tiles in the state and clause columns.
This is because for this to happen, some subset of the variable columns must be each pulled down by one (via the control layer) as specified by which white tiles are raised in the corresponding variable gadget, and the subsets corresponding to $x_i$ and $\lnot x_i$ are not contained in each other.
In addition, a variable column cannot be pulled down more than one tile until the white tile in that column in the clause layer gets shifted to the right, which can only happen when the clause tiles have fallen to the bottom row in the clause layer.
Thus, a solution cannot also `cheat' (before bringing down the clause tiles) by lowering an entire literal gadget to any row below since that would require one variable column to shift down by at least two.

Now note that a clause tile can only fall into the satisfaction position via the clause rope of an activated variable gadget.
If not, then if a solution tried to pull it down using activated literal gadgets, it could pull it down by at most $\ell$ rows using the white tiles on the bottom row. If it tried using deactivated literal gadgets, it could pull it down by at most $2n + 2 < \ell$ white tiles due to the black tile in the way.
Thus, including the possibility of using the white tiles in the control layer, a clause tile can fall strictly less than $2N\ell + 4N + 2 = h$ in this manner, so it can never fall into the satisfaction position.
Note that a clause rope could then never fall to a literal gadget below since the clause tile would need to have fallen into satisfaction position first, so there is no way to `cheat' bringing down the clause tile.

Thus, in a \scg win, a literal must be activated and have the corresponding clause rope pull down the clause tile by at least a row.
Then each column $C_j$ has an activated literal $x_i$ or $\lnot x_i$ with a clause rope pulling it down, i.e. it belongs to that clause.
Thus, the set of activated literals induces a boolean variable assignment to $x_1, \ldots, x_N$ that satisfies each of the clauses $C_1, \ldots, C_M$.
In the case that neither $x_i$ nor $\lnot x_i$ is set activated, arbitrarily setting the variable $x_i$ to true does not change the satisfaction of the boolean formula.
Thus, a solution to the \mobgsp induces a solution to the \scg and thus the 3SAT.
~\qed
\end{proof}

\begin{proof}[Proof of Theorem~\ref{t:sebgspa}]
Assume for now that $m$ is a power of two; the general case is discussed at the end of this proof.
In the first stage, we seek to spread out the black tiles to have a small enough density in the entire grid. We utilize the following algorithm:
\begin{enumerate}
    \item Consider the black tiles from the top row to the 4th to the bottom row, from left to right, and suppose we are on tile $b$.
    If $b$ has two white tiles underneath it, move on to the next.

    \item If not, then use the escort to absorb the black tile(s) underneath into the row of black tiles without affecting the status of ``completed'' tiles.
    If the row of $b$ is full of black tiles, move on to the next row of black tiles.
\end{enumerate}
Note that this takes $O(B)$ steps as there are $B$ black tiles each one can only be absorbed into the line above once.
Then once this process is done, each partially completed row will have two white rows below it, and so at most $1/3$ of the rows can be partially completed.
Since $r < 2$, for large enough $n$, $B/m^2 < 1/9$, and so at most $1/10$ of the rows can be fully black.
Thus, at least $5/9$ of the rows is white, and so rerunning the algorithm in $O(B)$ steps along the columns and instead only considering a single tile underneath the current black tile ensures that only partial columns are constructed.
Thus, the number of black tiles in each square will have a density of at most 1/2.

\begin{figure}
    \centering
    \includegraphics[width=1\linewidth]{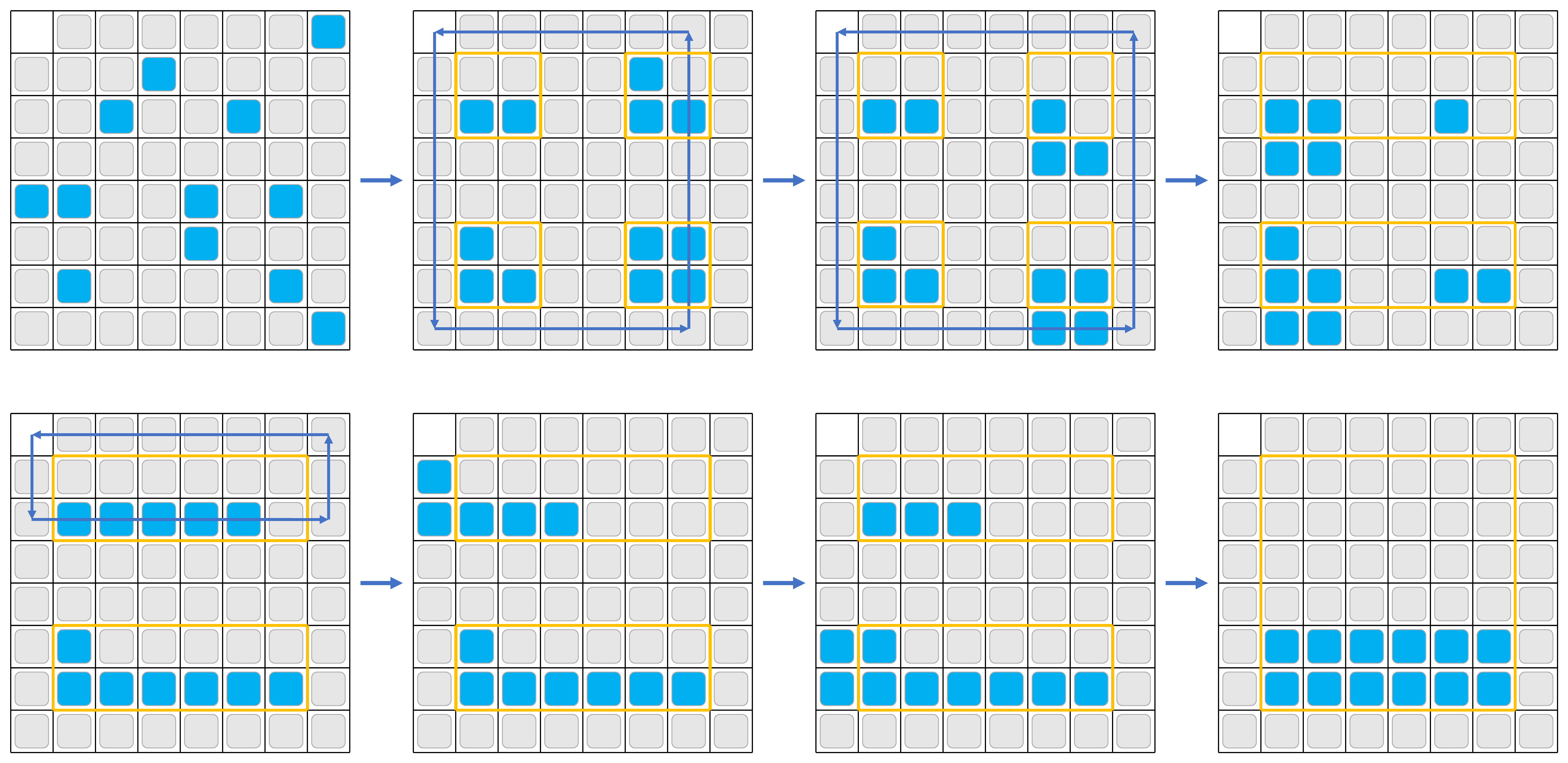}
    \caption{Illustration of the algorithm on an $8 \times 8$ instance, starting with $4 \times 4$ squares.
    From a starting configuration (1), the tiles are moved into the inner squares to reach (2) as in the 2nd stage.
    Then the squares are merged in matching pairs in the horizontal direction from (2)--(5) as in the 3rd stage, where blue tiles in the even column group are brought down to get (3), followed by highway rectangular shifts to move them underneath the matching square as in (4) to be subsequently absorbed as in (5).
    The same procedure is applied in the vertical direction from (5)--(8) to get a nearly sorted configuration, which is then brought down to the bottom row as in the 4th stage.}
\end{figure}

In the second stage, we move all the black tiles to the bottom of their respective $8 \times 8$ square left adjusted, manually bringing down each black tile individually in doing so to get a constant cost for each black tile, so this step can be completed also in $O(B)$ steps.
To make sure that the escort doesn't modify the density of black tiles much, one can snake the escort around the grid of $8 \times 8$ squares from the top row to the bottom, moving to the square below after traversing all the squares in a given row.

In the third stage, we now implement our divide-and-conquer algorithm.
Suppose we are on step $i$ of the algorithm, going from $i = 4, \ldots, \log m$.
In the first part, we combine the rows as follows:
for every other block column of $2^{i-1} \times 2^{i-1}$ squares starting with the second, use the escort to drag down each column containing a black tile once in $O(\min(m, B))$ steps.
Then, for each block row with at least one black tile positioned on the \emph{highway}, or the bottom row of each square, use the escort to advance the highway by $2^{i-1}$ steps to the left to be merged with the corresponding square on the left in $O(\min(m/2^i, B)2^i)$ steps.
Then use the escort to absorb the transported black tiles into the destination square in $O(B)$ steps, which takes care of most of the sparse merging pairs of squares.
For the dense ones that contain $c \geq 2^{i-1}$ black tiles, move the remaining black tiles to fill the inner $(2^{i-1} - 2) \times (2^i - 2)$ rectangle in the desired order (bottom row, left-adjusted) in $O(c)$ steps.
One way this can be done is by connecting the two piles of black tiles and then filling the black tiles in the desired order.
Thus, across all the squares, this takes $O(B)$ steps.

After merging along the rows, we merge the rectangles along the columns using the same algorithm.
However, we additionally want to take advantage of the highway that batches black tiles across the grid, so we use the escort to reorient the black tiles in every other block row, starting from the second, so as to fill just the leftmost column if possible, taking $O(B)$ steps.
Then moving the escort as before, instead across the rows to drag the black tiles onto the highway and then columns to drag them downwards takes $O(\min(m/2^i, B) 2^i)$ steps while absorbing the transferred black tiles in $O(B)$ steps takes care of the sparse instances (i.e. $c < 2^{i-1}$) while dense instances can again be combined in $O(c)$ time by connecting the two piles in $O(2^{i-1})$ time and then building the black pile in the desired order in $O(c)$ time for an overall $O(B)$ additional time steps.

Thus in the fourth stage, after completing the divide and conquer stage, the black tiles will be piled near the bottom of the grid except for a unit margin of white tiles.
Then, the escort can be used to drag the black tiles to fill the bottom of the grid, left-adjusted, in $O(\min(m, B))$ steps as needed.
Combining everything, it takes $O(B)$ steps to complete the first, second, and fourth stages, while the divide and conquer algorithm takes $O(m 
\log m + B \log m)$ time steps.
Plugging in $B = \Theta(m^r)$ gives the desired bound.

The case when $m$ is not a power of two or when $m_1, m_2 = \Theta(m)$ can be handled similarly by accounting for border rectangles, but we can more simply choose the largest power of two $2^s$ below both $m_1$ and $m_2$ and run the algorithm on each of a covering set of $2^s \times 2^s$ squares, collecting the black tiles at the bottom afterward to obtain the same time complexity.
~\qed
\end{proof}

\begin{proof}[Proof of \textup{Theorem~\ref{t:mebgspa}}]
We sketch the proof strategy, which is similar to the single escort case, except that we now need to move the escorts efficiently to ensure the desired time complexity is achieved. 
First, the statement when $B/p \leq m$ requires a time complexity of $O(m \log m)$, and so we can forget about the additional escorts and instead show the statement when $B/p \geq m$ as $B/p \approx m$ gives the $O(m \log m)$ time complexity.
    Thus, assume that $r > 1$ and $p < B/m$.
    We again can reduce to the case when $m_1 = m_2 = m$ is a power of two by running the algorithm a few times to line up the black tiles along the bottom of the grid (although not necessarily spread out as needed).
    In order to spread out the black tiles to get the appropriate time complexity, we can utilize each of the $p$ escorts to handle a small portion of the black tiles without interfering with each other (see figure), with each escort handling $B/p > m$ tiles permitting each escort to move their assigned black tiles in $O(B/p + m) = O(B/p)$ time, subsumed by the $O(m^r \log m / p)$ term needed.

    Now, we show that each of the steps of the single escort algorithm can be effectively done in parallel.
    The first stage can be completed by dividing the grid into several row blocks that have about $\Theta(B/p)$ black tiles each, where each escort is assigned one of the row blocks to separate into partial, white, and black rows;
    the escorts can initially be moved to the right column before partitioning the rows due to the possible change of black tiles in each row block as escorts move.
    Afterward, the escorts will be on separate rows so that they can move to the appropriate column in the subsequent column block assignment, thus completing the spreading stage in $O(B/p)$ time.

    The second stage is handled by assigning the escorts connected components of $8 \times 8$ squares, each containing $\Theta(B/p)$ black tiles, which they can all move to in a single time step being on separate columns and without affecting the density of the black tiles significantly.
    Thus, this stage can be completed in $O(B/p)$ time.

    The third stage is the most interesting:
    the default position of the escorts is to be on the bottom row or left column, which they can fit on as $p < B/m < m$.
    When pulling down columns to put black tiles onto highways, we assign each escort $O(\min(m, B) / p)$ columns to handle, which can be done parallel after moving the escorts in $O(m)$ time along the bottom row.
    Then move the escorts to the left column in $O(m)$ steps, assigning each escort to handle $O(\min(m / 2^i, B) / p)$ of the highways to complete the transportation step in $O(\min(m / 2^i, B) 2^i / p)$ time steps.
    Afterwards, assign each of the escorts connected components of $2^i \times 2^i$ squares, each with $O(B/p)$ black tiles in them;
    the escorts can be moved along the margins in $O(m)$ steps to their destination blocks through Rubik Tables as follows:
    since each of the rectangles has at most a 1/2 density of black tiles, the top half of the rectangle (except for possibly a few rows) will contain no black tiles, and so the subgrid consisting of all the $\approx m/2$ rows of white tiles along with the $\Theta(m/2^i)$ column margins contains all the escorts and can be arbitrarily reconfigured in $O(m)$ steps by simulating the Rubik Table algorithm.

    Then in their assigned connected component, the escorts can absorb the black tiles, where sparse merging instances will be part of components with a single escort since $p > B/m$, leading to the completion of those connected components in $O(B/p)$ time.
    For the dense rectangles with more than 1 escort, the connected component will consist of simply the merging pair of rectangles.
    Then these can be effectively merged in $O(B/p)$ time by assigning each escort to handle $O(B/p)$ of the black tiles in much the same way that the black tiles are spread out along the bottom row in our initial reduction to the square grid case.
    Afterward, escorts can be transported back to the bottom row in $O(m)$ steps via the underlying Rubik table of white tiles before being moved to the left column to simulate everything again.
    The only difference is that escorts must mediate between the bottom row to be transported to and from their destination spot in the grid in $O(m)$ steps, ensuring a faithful simulation of the divide and conquer approach in parallel.

    The last stage can simply be done in $O(\min(m, B))$ time as with a single escort.
    Thus, the overall running time will be $O(m \log m + B \log m / p)$, or $O(m \log m + m^r \log m / p)$.
~\qed
\end{proof}

\begin{proof}[Proof of \textup{Theorem~\ref{t:hdbgspa}}]
We again sketch a similar approach, highlighting the differences. 
The divide and conquer algorithm more or less extends to the case with higher dimensions by design.
    We highlight the main differences below.
    Again, note that the statement reduces to the case in which $B/p \geq m$, i.e. $r > 1$ and $p < B/m$, where we work on an $m \times \cdots \times m$ grid for $m$ a power of two, gaining a potential constant factor as a function of $d$ (which is viewed as a constant).
    This means that each escort is assigned at least $m$ associated black tiles, so all of them fit in a $d-1$ dimensional hyperplane e.g. any single border of the $d$ dimensional grid, important to ensuring that additional escorts add to the efficiency of parallelization of the algorithm.
    In our reduction, we again need to spread out the black tiles along the bottom border of the grid. By efficiently assigning each escort to move $O(B/p)$ black tiles a distance of potentially $\Omega(m)$, we can again batch movement and complete the reduction in $O(B/p + m) = O(B/p)$ time, subsumed by the $O(m^r \log m / p)$ term again.

    As in the previous algorithm, we will again need to utilize the higher dimensional Rubik Tables to ensure that escorts can be moved quickly throughout the grid to simulate all that a single escort could do efficiently.
    Specifically, we can arbitrarily move the escorts in a given $d$ dimensional grid, in $O(n)$ steps (treating $d$ as a constant) assuming that all the occupied tiles are of the same color.

    In the first stage of the algorithm, we first move the escorts to $(1, -, \ldots, -)$ in $O(m)$ steps.
    We now consider subgrids of the form $(-, -, x_3, \cdots, x_n)$ separately for $1 \leq x_3, \ldots, x_n \leq m$, splitting each 2D grid into row blocks as before and assigning each escort to handle $O(B/p)$ black tiles.
    We then reconfigure the escorts along the border in $O(m)$ steps to reach their assigned 2D grid and move to their row block, completing the spreading process in $O(B/p)$ steps, moving them back to the border in another $O(m)$ steps.

    For the second stage, we split up the grid into boxes of size $2^s \times \cdots \times 2^s$, choosing a rougher level of granularity to ensure that the inner box containing the black tiles contains at least 9/10 of the volume of the whole box;
    this is important in ensuring that most of the upper part of the inner box has white tiles to allow for the escort reconfiguration process in the divide and conquer algorithm.
    Note that $s$ is a function of $d$, which is viewed as a constant.
    Then escorts are reconfigured in the border in $O(m)$ steps to jump to their assigned connected component of boxes each containing $O(B/p)$ black tiles in a single step, preserving the density of black tiles throughout.
    Then the black tiles can be sorted to the bottom of their inner box (filling in the first coordinate, then the second, etc.), from which escorts can be rerouted along the subgrid of white tiles through the higher dimensional Rubik Table algorithm into the top of the grid $(-, \ldots, -, m)$ in $O(m)$ steps.

    The third stage happens in the usual way, except that we now merge along each of the $d$ coordinates in order, where the highway for the first $d-1$ coordinates will lie in the margins below the inner boxes in the direction of the current dimension;
    the highway for the last dimension can be chosen to be along the first dimension.
    Then for a given dimension, we may need to modify the black tiles slightly to ensure maximal use of the highway transportation, assigning connected components with $O(B/p)$ black tiles each, dispersing the escorts in $O(m)$ steps from $(-, \ldots, -, m)$, reorienting the black tiles in $O(B/p)$ time, and sending the escorts back to the top of the grid in $O(m)$ time.
    Then, the escorts can be moved along the borders from $(-, \ldots, -, m)$ to the desired border, assigning each escort $O(\min(m^{d-1}, B) / p)$ columns to drag down black tiles onto the highway, which can be done simultaneously in $O(\min(m^{d-1}, B) / p)$ time steps after a reconfiguration in the border in $O(m)$ steps.
    Afterward, they can be moved along the borders in the direction of the highway and reconfigured in $O(m)$ steps, assigning each escort to handle $O(\min([m/2^i]^{d-1}, B) / p)$ highways, which they can complete in $O(\min([m/2^i]^{d-1}, B) 2^i / p)$ time steps.
    They can then be moved back to the border and to the top of the grid, reconfigured, and then dispersed through the grid in $O(m)$ steps to the assigned blocks containing $O(B/p)$ black tiles.
    Sparse instances are again completed in $O(B / p)$ time steps, while dense instances consist of a single pair of boxes to be joined with potentially several escorts, all of which fit inside the top margin of the pair of boxes as we assumed that $B/p \geq m$.
    Thus, dense blocks can be solved in $O(B/p + 2^i)$ or $O(B/p)$ time by effectively splitting up the black tiles, $O(B/p)$ to be handled by each escort such as in the algorithm to spread the black tiles along the bottom of the grid in the initial reduction.
    Then escorts can be moved back to the bottom of the grid again in $O(m)$ steps, repeating this process for each dimension and level of granularity until we are left with a single box.

    Finally, we can complete the last stage by splitting up the escorts to handle $O(B/p)$ black tiles again, moving down the black tiles in $O(B/p)$ time steps.
    Thus, the overall instance takes $O(m \log m + B \log m / p)$ time steps as well as the additional $O(\sum_i \min(m^{d-1}, B) / p + \min([m/2^i]^{d-1}, B)2^i / p)$ time steps.
    When $r < d-1$, the first term becomes $O(B \log m / p)$, whereas the second term dominates when $B \approx[m/2^i]^{d-1}$, which rearranges to $2^i \approx m^{1 - r/(d-1)}$, giving $O(m^{1 + r [\frac{d-2}{d-1}]}/p)$.
    This gives the desired bound when $r < d-1$.
    For $r \geq d-1$, the second term remains the same while the first term becomes $O(m^{d-1} \log m / p)$, which is subsumed by the $O(B/p) = O(m^r \log m / p)$ time.
    Thus, we again obtain the desired $O(m \log m + m^{1 + r[\frac{d-2}{d-1}]} / p + m^r \log m / p)$ time complexity.
~\qed  
\end{proof}

\begin{proof}[Proof of \textup{Theorem~\ref{t:lb3}}]
    The underlying motivation is in the difficulty of routing tiles at the critical level of granularity as in the analysis of the upper bound, which is when $2^i \approx m^{1 - r/(d-1)}$; denote this quantity as $\ell$.
    Consider a random placement of the black tiles under this level of granularity.
    One of the $2^d$ sub-lattices $\ell$ obtained from the grid of $\ell \times \cdots \times \ell$ boxes of wire length 2 (wire length $2\ell$ in the overarching grid) contains a maximal number of black tiles with at least $1/2^d$ of them.
    Then we can lower bound this instance by the grid collapse game only on $\ell$, ignoring all of the other black tiles.
    Note that in any solution, a black tile must touch one of the $d-1$ dimensional hyperplanes passing through any of the centers of the other $\ell \times \cdots \times \ell$ boxes that aren't a part of $\ell$ (where we snap the destination position to this lattice and lose at most an additive term of $\ell \ll m^{1 + r[\frac{d-2}{d-1}]}$), and so we can lower bound by the problem of moving a black tile to touch at least one of these hyperplanes.
    Furthermore, this is lower bounded by the problem in which boxes containing a black tile must simply use a generalized column intersecting the region it occupies cut out by the hyperplanes, at least $\ell/2$ times (the gap between the box and each of the surrounding hyperplanes).
    Note that by randomness, we can assume that there are $B/2^d$ black tiles randomly taken from $\ell$, and furthermore, black tiles are chosen with replacement (since to get without replacement, we simply reroll more black tiles to get more black tiles and thus harder instances).

    To summarize, the problem has reduced to the following:
    consider the $(m/\ell) \times \cdots \times (m/\ell) = m^{r/(d-1)} \times \cdots \times m^{r/(d-1)}$ grid, where we randomly sample $B' = B/2^d = \Theta(m^r)$ black tiles with replacement; let $B' = c \cdot m^r$ for a stable constant $c$.
    Then we need to choose a sequence of columns so that each cell of this grid (representing an $\ell \times \cdots \times \ell$ box in $\ell$) touches at least $\ell/2$ of the columns.
    Then due to the relative sparsity of the grid, we can use a balls and bins argument to show that with high probability, columns contain relatively little black tiles, requiring us to use a lot of column operations.

    Consider the columns along one of the $d$ dimensions, and let $X_p$ be the number of columns in that direction that contain exactly $p$ of the black tiles (that may potentially occupy the same cell of the grid), so $X_p = \sum_{i=1}^{m^r} X_{pi}$ for $X_{pi}$ indicating whether the $i$th column contains exactly $p$ black tiles.
    Then $$\E[X_p] = m^r \E[X_{pi}] = m^r \binom{B'}{p} \left(\frac{1}{m^r}\right)^p \left(1 - \frac{1}{m^r} \right)^{B' - p} \leq m^r \frac{(B')^p}{p!} \cdot \frac{1}{m^{rp}} = B' c^{p-1} / p!$$
    and $$\Var[X_p] = m^r \Var[X_{pi}] + m^r (m^r - 1) \Cov(X_{pi}, X_{pj}) \leq m^r \Var[X_{pi}]$$
    where we have used that $\Cov(X_{pi}, X_{pj})$ is negatively correlated (knowing that a column contains $p$ black tiles means that there are much less black tiles spread around the other columns);
    this can be manually checked since 
    
    $$\E[X_{pi} X_{pj}] - E[X_{pi}]^2 \leq 0$$
    if and only if 
    $$\binom{B' - p}{p} \left(\frac{1}{m^r - 1}\right)^p \left(1 - \frac{1}{m^r - 1} \right)^{B' - 2p} \leq \binom{B'}{p} \left( \frac{1}{m^r} \right)^p \left(1 - \frac{1}{m^r} \right)^{B' - p},$$
    which is true since $$\frac{1}{m^r - 1} \cdot \frac{m^r - 2}{m^r - 1} \leq \frac{1}{m^r}.$$
    Then since $\Var[X_{pi}] \leq \E[X_{pi}]$, Chebyshev's bound gives that $$\Pr(|X_p - \E[X_p]| \geq t) \leq \Var(X_p)/t^2 \leq m^r\E[X_{pi}] / t^2 \leq B' c^{p-1} / [p! t^2],$$
    and so setting $t = \frac{ 
 p c^{p-1} B'}{\sqrt{p!}}$ gives $$\Pr( |X_p - \E[X_p]| \geq c^{p-1} B' / \sqrt{p!}) \leq 1 / (p^2 B' c^{p-1}).$$
 Thus, by union bounding over all $1 \leq p \leq m^r$, with a probability of failure of at most $\pi^2 / 6B' c^{p-1}$ i.e. high probability, we have that $X_p \leq \E[X_p] + t \leq 2t \leq 2pc^{p-1} B' / \sqrt{p!}$.
 By union bounding over the $d$ dimensions, we can assume that the total number of columns that have exactly $p$ black tiles $Y_p$ is at most $2pdc^{p-1} B' / \sqrt{p!}$, which happens with a probability of failure of at most $d\pi^2 / (6 B' c^{p-1}) = o(1)$.

 Now, the best possible solution would try to maximize the number of columns that contain a lot of black tiles, possibly with difficulties due to overlap between the $d$ dimensions.
 Nevertheless, we can lower bound by the case in which $X_p = \lfloor 2pc^{p-1} B' / \sqrt{p!} \rfloor$ and there is no interference among the columns.
 Thus, an optimal solution would then go through the list of columns in reverse order of the number of black tiles, using each a maximal number of $\ell/2$ times (since each black tile only needs to be hit $\ell/2$ times).
 Thus, we seek a maximal $S$ such that $$\sum_{p = S}^\infty d (\ell/2) p \lfloor 2pc^{p-1} B' / \sqrt{p!} \rfloor  \geq (\ell/2)B'',$$
 where $B''$ is the number of distinct black tiles chosen;
 this is with high probability at least $f \cdot B'$ for a constant $f$ as we select $B'$ random black tiles from $m^{rd / (d-1)} \gg B'$ (i.e. using the standard balls and bins fact that throwing $n$ balls into $n$ bins covers a positive fraction with high probability).
 Then we can relax the condition on $S$ to drop the floor function and instead seek $S$ satisfying $$\sum_{p=S}^\infty c^{p-1}\frac{p^2}{\sqrt{p!}} \geq \frac{f}{2d}.$$
 The LHS eventually dies off faster  than exponentially due to the $1 / \sqrt{p!}$ factor, so the main issue is choosing an $S$ such that $c^{S-1} S / \sqrt{S!} = \Omega(1 / d)$, which requires $S = \Theta(\log d)$.
 Thus, such an $S$ for bounding an optimal solution (since the sum forces us to pick a column and use it $\Theta(d\ell)$ times) loses an additive factor of at most $O(d\ell)$, where $d\ell \ll m^{1 + r[\frac{d-2}{d-1}]}$, which doesn't affect the target lower bound.
 Then the total number of moves used will be $$\sum_{p = S}^\infty d (\ell/2) \lfloor 2pc^{p-1} B' / \sqrt{p!} \rfloor$$
 which we seek to show is $\Omega(\ell B')$.
 Indeed, at most $\Theta(\log B')$ of the terms on the LHS will be nonzero, so it suffices to show that $$\sum_{p=S}^\infty d(\ell/2)2pc^{p-1} B' / \sqrt{p!} - (d\ell/2) \Theta(\log B') = \Omega(lB'),$$
 reducing to simply $$\sum_{p=S}^\infty d\ell p c^{p-1} B' / \sqrt{p!} = \Omega(\ell B').$$
 This reduces to showing that $$\sum_{p=S}^\infty p / \sqrt{p!} = \Omega(1 / d),$$
 which is the case as $S = \Theta( \log d)$ (satisfying nearly the same equation except with $p^2$ instead of $p$ and with a higher constant on the RHS) where the $\sqrt{p!}$ factor is the dominating term.
 Thus, we have an $\Omega(\ell B') = \Omega(m^{1 - r / (d-1)} m^r) = \Omega(m^{1 + r[\frac{d-2}{d-1}]})$ high probability lower bound as desired.
 ~\qed
\end{proof}
\end{document}